\newtheorem{theorem}{Theorem}
\newtheorem{lemma}{Lemma}
\newtheorem{definition}{Definition}
\newtheorem{assumption}{Assumption}
\newtheorem{proposition}{Proposition}
\newtheorem{remark}{Remark}
\let\save@mathaccent\mathaccent
\newcommand*\if@single[3]{\setbox0\hbox{${\mathaccent"0362{#1}}^H$}\setbox2\hbox{${\mathaccent"0362{\kern0pt#1}}^H$}\ifdim\ht0=\ht2 #3\else #2\fi
  }
\newcommand*\rel@kern[1]{\kern#1\dimexpr\macc@kerna}
\newcommand*\widebar[1]{\@ifnextchar^{{\wide@bar{#1}{0}}}{\wide@bar{#1}{1}}}
\newcommand*\wide@bar[2]{\if@single{#1}{\wide@bar@{#1}{#2}{1}}{\wide@bar@{#1}{#2}{2}}}
\newcommand*\wide@bar@[3]{\begingroup
  \def\mathaccent##1##2{\let\mathaccent\save@mathaccent
    \if#32 \let\macc@nucleus\first@char \fi
    \setbox\z@\hbox{$\macc@style{\macc@nucleus}_{}$}\setbox\tw@\hbox{$\macc@style{\macc@nucleus}{}_{}$}\dimen@\wd\tw@
    \advance\dimen@-\wd\z@
    \divide\dimen@ 3
    \@tempdima\wd\tw@
    \advance\@tempdima-\scriptspace
    \divide\@tempdima 10
    \advance\dimen@-\@tempdima
    \ifdim\dimen@>\z@ \dimen@0pt\fi
    \rel@kern{0.6}\kern-\dimen@
    \if#31
      \overline{\rel@kern{-0.6}\kern\dimen@\macc@nucleus\rel@kern{0.4}\kern\dimen@}\advance\dimen@0.4\dimexpr\macc@kerna
      \let\final@kern#2\ifdim\dimen@<\z@ \let\final@kern1\fi
      \if\final@kern1 \kern-\dimen@\fi
    \else
      \overline{\rel@kern{-0.6}\kern\dimen@#1}\fi
  }\macc@depth\@ne
  \let\math@bgroup\@empty \let\math@egroup\macc@set@skewchar
  \mathsurround\z@ \frozen@everymath{\mathgroup\macc@group\relax}\macc@set@skewchar\relax
  \let\mathaccentV\macc@nested@a
  \if#31
    \macc@nested@a\relax111{#1}\else
    \def\gobble@till@marker##1\endmarker{}\futurelet\first@char\gobble@till@marker#1\endmarker
    \ifcat\noexpand\first@char A\else
      \def\first@char{}\fi
    \macc@nested@a\relax111{\first@char}\fi
  \endgroup
}
\newcommand{\approxleq}{\lesssim}
\newcommand{\bigoh}{O}
\newcommand{\bigoht}{\wt{O}}
\newcommand{\bigom}{\Omega}
\newcommand{\bigomt}{\wt{\Omega}}
\newcommand{\pref}[1]{\cref{#1}}
\newcommand{\pfref}[1]{Proof of \pref{#1}}
\newcommand{\polylog}{\mathrm{polylog}}
\newcommand{\indic}{\mathbb{I}}
\newcommand{\algcommentlight}[1]{\textcolor{blue!70!black}{\transparent{0.5}\small{\texttt{\textbf{//\hspace{2pt}#1}}}}}
\DeclarePairedDelimiter{\abs}{\lvert}{\rvert} \DeclarePairedDelimiter{\brk}{[}{]}
\DeclarePairedDelimiter{\crl}{\{}{\}}
\DeclarePairedDelimiter{\prn}{(}{)}
\DeclarePairedDelimiter{\ceil}{\lceil}{\rceil}
\DeclarePairedDelimiterX{\infdiv}[2]{(}{)}{#1\;\delimsize\|\;#2}
\let\P\undefined
\DeclareMathOperator{\P}{P}
\newcommand{\wt}[1]{\widetilde{#1}}
\newcommand{\wh}[1]{\widehat{#1}}
\newcommand{\wb}[1]{\widebar{#1}}
\def\ddefloop#1{\ifx\ddefloop#1\else\ddef{#1}\expandafter\ddefloop\fi}
\def\ddef#1{\expandafter\def\csname bb#1\endcsname{\ensuremath{\mathbb{#1}}}}
\def\ddefloop#1{\ifx\ddefloop#1\else\ddef{#1}\expandafter\ddefloop\fi}
\def\ddef#1{\expandafter\def\csname b#1\endcsname{\ensuremath{\mathbf{#1}}}}
\def\ddef#1{\expandafter\def\csname sf#1\endcsname{\ensuremath{\mathsf{#1}}}}
\def\ddef#1{\expandafter\def\csname c#1\endcsname{\ensuremath{\mathcal{#1}}}}
\def\ddef#1{\expandafter\def\csname h#1\endcsname{\ensuremath{\widehat{#1}}}}
\def\ddef#1{\expandafter\def\csname hc#1\endcsname{\ensuremath{\widehat{\mathcal{#1}}}}}
\def\ddef#1{\expandafter\def\csname t#1\endcsname{\ensuremath{\widetilde{#1}}}}
\def\ddef#1{\expandafter\def\csname tc#1\endcsname{\ensuremath{\widetilde{\mathcal{#1}}}}}
\def\ddefloop#1{\ifx\ddefloop#1\else\ddef{#1}\expandafter\ddefloop\fi}
\def\ddef#1{\expandafter\def\csname scr#1\endcsname{\ensuremath{\mathscr{#1}}}}
\newcommand{\ind}{\mathbbm{1}}
\newcommand{\ldef}{\vcentcolon=}
\DeclareMathOperator*{\argmin}{arg\,min}
\DeclareMathOperator{\unif}{{unif}}
\DeclareMathOperator{\dom}{{dom}}
\def\E{{\mathbb E}}
\def\P{{\mathbb P}}
\def\R{{\mathbb R}}
\def\N{{\mathbb N}}
\let\oldparagraph\paragraph
\renewcommand{\paragraph}[1]{\oldparagraph{#1.}}
\renewcommand{\epsilon}{\varepsilon}
\newcommand{\RegCB}{\mathrm{\mathbf{Reg}}_{\mathsf{CB}}}
\newcommand{\RegSq}{\mathrm{\mathbf{Reg}}_{\mathsf{Sq}}}
\newcommand{\AlgSq}{\mathrm{\mathbf{Alg}}_{\mathsf{Sq}}}
\newcommand{\AlgSample}{\mathrm{\mathbf{Alg}}_{\mathsf{Sample}}}
\newcommand{\dec}{\mathsf{dec}}
\newcommand{\SqAlg}{\AlgSq}
\newcommand{\regcb}{\RegCB}
\newcommand{\regcbh}{\mathrm{\mathbf{Reg}}_{\mathsf{CB},h}}
\newcommand{\regcbhb}{\mathrm{\mathbf{Reg}}_{\mathsf{CB},h_b}}
\newcommand{\regimphb}{\mathrm{\mathbf{Reg}}_{\mathsf{Imp},h_b}}
\newcommand{\regimph}{\mathrm{\mathbf{Reg}}_{\mathsf{Imp},h}}
\newcommand{\regsq}{\RegSq}
\newcommand{\sqalgtext}{$\SqAlg$\xspace}
\newcommand{\sqalg}{\sqalgtext}
\newcommand{\samplealg}{\AlgSample}
\newcommand{\samplealgtext}{$\samplealg$\xspace}
\newcommand{\smthigw}{\textsf{SmoothIGW}\xspace}
\newcommand{\Tsq}{\cT_{\mathsf{Sq}}}
\newcommand{\Tsample}{\cT_{\mathsf{Sample}}}
\newcommand{\Msq}{\cM_{\mathsf{Sq}}}
\newcommand{\Msample}{\cM_{\mathsf{Sample}}}
\newcommand{\curly}{\crl}
\newcommand{\paren}{\prn}
\newcommand{\sq}{\brk}
\newcommand{\dectext}{Decision-Estimation Coefficient\xspace}
\newcommand{\sgm}{\Omega}
\newcommand{\smthh}{\mathsf{Smooth}_h\xspace}
\newcommand{\smthhb}{\mathsf{Smooth}_{h_b}\xspace}
\newcommand{\smth}{\text{smooth}\xspace}
\newcommand{\cats}{\textsf{CATS}\xspace}
\newcommand{\greedy}{$\epsilon$-\textsf{Greedy}\xspace}
\newcommand{\corral}{\textsf{CORRAL}\xspace}
\title{Contextual Bandits with Smooth Regret: Efficient Learning in Continuous Action Spaces}
\date{}
\author{
Yinglun Zhu\\
{\normalsize University of Wisconsin-Madison}\\
{\small\texttt{yinglun@cs.wisc.edu}}
\and
\and
Paul Mineiro\\
{\normalsize Microsoft Research, NYC}\\
{\small\texttt{pmineiro@microsoft.com}}
}
\begin{document}

\maketitle
\begin{abstract}
	Designing efficient general-purpose contextual bandit algorithms that work with large---or even continuous---action spaces would facilitate application to important scenarios such as information retrieval, recommendation systems, and continuous control. While obtaining standard regret guarantees can be hopeless, alternative regret notions have been proposed to tackle the large action setting. We propose a smooth regret notion for contextual bandits, which dominates previously proposed alternatives. We design a statistically and computationally efficient algorithm---for the proposed smooth regret---that works with general function approximation under standard supervised oracles. We also present an adaptive algorithm that automatically adapts to any smoothness level. Our algorithms can be used to recover the previous minimax/Pareto optimal guarantees under the standard regret, e.g., in bandit problems with multiple best arms and Lipschitz/H{\"o}lder bandits. We conduct large-scale empirical evaluations demonstrating the efficacy of our proposed algorithms.
\end{abstract}

\section{Introduction}

Contextual bandits concern the problem of sequential decision making with contextual information.
Provably efficient contextual bandit algorithms have been proposed over the past decade \citep{langford2007epoch,agarwal2014taming,foster2020beyond,simchi2021bypassing,foster2021efficient}.
However, these developments only work in setting with a small number of actions,
and their theoretical guarantees become vacuous when working with a large action space \citep{agarwal2012contextual}. 
The hardness result can be intuitively understood through a ``needle in the haystack'' construction: 
When good actions are extremely rare, identifying any good action demands trying almost all alternatives.
This prevents naive direct application of contextual bandit algorithms to large action problems, e.g., in information retrieval, recommendation systems, and continuous control.

To bypass the hardness result, one approach is to assume structure on the model class.
For example, in the standard linear contextual bandit \citep{auer2002using, chu2011contextual, abbasi2011improved}, learning the $d$ components of the reward vector---rather than examining every single action---effectively guides the learner to the optimal action.
Additional structural assumptions have been studied in the literature, e.g., linearly structured actions and general function approximation \citep{foster2020adapting, xu2020upper}, Lipschitz/H\"older regression functions \citep{kleinberg2004nearly, hadiji2019polynomial}, and convex functions \citep{lattimore2020improved}.
While these assumptions are fruitful theoretically, they might be violated in practice.

An alternative approach is to compete against a less demanding benchmark.
Rather than competing against a policy that always plays the best action, one can compete against a policy that plays the best smoothed distribution over the actions: a smoothed distribution---by definition---cannot concentrate on the best actions when they are in fact rare.  Thus, for the previously mentioned ``needle in the haystack'' construction, the benchmark is weak as well.  This de-emphasizes such constructions and focuses algorithm design on scenarios where intuition suggests good solutions can be found without prohibitive statistical cost.

\paragraph{Contributions}
We study large action space problems under an alternate notion of regret.
Our first contribution is to propose a novel benchmark---the \smth regret---that formalizes the ``no needle in the haystack'' principle.  
We also show that our \smth regret dominates previously proposed regret notions along this line of work \citep{chaudhuri2018quantile, krishnamurthy2020contextual, majzoubi2020efficient}, i.e., any regret guarantees with respect to the \smth regret automatically holds for these previously proposed regrets.

We design efficient algorithms that work with the \smth regret and general function classes. 
Our first proposed algorithm, \smthigw, works with any fixed smoothness level $h>0$, and is efficient---both statistically and computationally---whenever the learner has access to standard oracles: (i) an online regression oracle for supervised learning, and (ii) a simple sampling oracle over the action space.
Statistically, \smthigw achieves $\sqrt{T /h}$-type regret for whatever action spaces; here ${1}/{h}$ should be viewed as the effective number of actions. Such guarantees can be verified to be minimax optimal when related back to the standard regret.
Computationally, the guarantee is achieved with $O(1)$ operations with respect to oracles, which can be usually efficiently implemented in practice.
Our second algorithm is a master algorithm which combines multiple \smthigw instances to compete against any unknown smoothness level. We show this master algorithm is Pareto optimal.

With our \smth regret and proposed algorithms, we exhibit guarantees under the standard regret in various scenarios, e.g., in problems with multiple best actions \citep{zhu2020regret} and in problems when the expected payoff function satisfies structural assumptions such as Lipchitz/H\"{o}lder continuity \citep{kleinberg2004nearly, hadiji2019polynomial}. Our algorithms are minimax/Pareto optimal when specialized to these settings.

\subsection{Paper Organization}
We introduce our smooth regret in \cref{sec:setting}, together with statistical and computational oracles upon which our algorithms are built. 
In \cref{sec:alg}, we present our algorithm \smthigw, which illustrates the core ideas of learning with smooth regret at any fixed smoothness level. 
Built upon \smthigw, in \cref{sec:adaptive}, we present a \corral-type of algorithm that can automatically adapt to any unknown smoothness level.
In \cref{sec:extension}, we connect our proposed smooth regret to the standard regret over various scenarios.
We present empirical results in \cref{sec:experiments}, and close with a discussion in \cref{sec:discussion}.

\section{Problem Setting}
\label{sec:setting}

We consider the following standard contextual bandit problems. At any time step $t \in [T]$, nature selects a context $x_t \in \cX$ and a distribution over loss functions $\ell_t: \cA \rightarrow [0,1]$ mapping from the (compact) action set $\cA$ to a loss value in $[0, 1]$.  Conditioned on the context $x_t$, the loss function is stochastically generated, i.e., $\ell_t \sim \P_{\ell_t}(\cdot \mid x_t)$.
The learner selects an action $a_t \in \cA$ based on the revealed context $x_t$, and obtains (only) the loss $\ell_t(a_t)$ of the selected action. 
The learner has access to a set of measurable regression functions $\cF \subseteq (\cX \times \cA \rightarrow [0,1])$ to predict the loss of any context-action pair.
We make the following standard realizability assumption studied in the contextual bandit literature \citep{agarwal2012contextual, foster2018practical, foster2020beyond, simchi2021bypassing}.
\begin{assumption}[Realizability]
\label{asm:realizability}
There exists a regression function $f^\star \in \cF$ such that $ \E \sq{\ell_t(a) \mid x_t} = f^\star(x_t, a)$ for any $a \in \cA$ and across all $t \in [T]$.
\end{assumption}

\paragraph{The smooth regret}
Let $(\cA, \sgm)$ be a measurable space of the action set and $\mu$ be a base probability measure over the actions. 
Let $\cQ_h$ denote the set of probability measures such that, for any measure $Q \in \cQ_h$, the following holds true: (i) $Q$ is absolutely continuous with respect to the base measure  $\mu$, i.e., $Q \ll \mu$; and (ii) The Radon-Nikodym derivative of $Q$ with respect to $\mu$ is no larger than $\frac{1}{h}$, i.e., $\frac{dQ}{d\mu} \leq 1/h$.
We call $\cQ_h$ the set of {smoothing kernels at smoothness level $h$, or simply put the set of $h$-smoothed kernels.}
For any context $x \in \cX$, we denote by $\smthh(x)$ the smallest loss incurred by any  $h$-smoothed kernel, i.e.,  
\begin{align*}
    \smthh(x) \ldef \inf_{Q \in \cQ_h}\E_{a \sim Q}\sq{f^\star(x,a)}.
\end{align*}
Rather than competing with $\argmin_{a\in\cA}f^\star(x,a)$---an impossible job in many cases---we take $\smthh(x)$ as the benchmark and define the \emph{smooth regret} as follows:
\begin{align}
    \regcbh(T) & \coloneqq  
    \E \sq*{ \sum_{t=1}^T  f^\star(x_t, a_t) - \smthh(x_t) } \label{eq:smooth_regret}.
\end{align}
One important feature about the above definition is that the benchmark, i.e., $\smthh(x_t)$, automatically adapts to the context $x_t$: 
This gives the benchmark more power and makes it harder to compete against. 
In fact, our \smth regret dominates many existing regret measures with \emph{easier} benchmarks. We provide some examples in the following. 
\begin{itemize}
    \item \citet{chaudhuri2018quantile} propose the quantile regret, which aims at competing with the lower $h$-quantile of the loss function, i.e., $v_h(x) \ldef \inf \crl{\zeta: \mu(a \in \cA: f^\star(x,a) \leq \zeta) \geq h}$.
	    Consider $\cS_h \ldef \crl{a \in \cA: f^{\star}(x,a) \leq \nu_h(x)}$ such that $\mu(\cS_h) \geq h$.
	   Let $\wb Q_h \ldef \mu \vert_{\cS_h} / \mu(\cS_h)$ denote the (normalized) probability measure after restricting $\mu$ onto  $\cS_h$.
Since $\wb Q_h \in \cQ_h$, we clearly have $\smthh(x) \leq \E_{a \sim \wb Q_h} \brk{f^{\star}(x,a)} \leq \nu_h(x)$.
Besides, the (original) quantile was only studied in the non-contextual case.
    \item 
\citet{krishnamurthy2020contextual} study a notion of regret that is smoothed in a different way:
Their regret aims at competing with a known and \emph{fixed} smoothing kernel (on top of a {fixed} policy set) with Radon-Nikodym derivative at most ${1}/{h}$. Our benchmark is clearly harder to compete against since we consider any smoothing kernel with Radon-Nikodym derivative at most ${1}/{h}$. 
\end{itemize}
Besides being more competitive with respect to above benchmarks, \smth regret can also be naturally linked to the \emph{standard} regret under various settings previously studied in the bandit literature, e.g., 
in the discrete case with multiple best arms \citep{zhu2020regret} and in the continuous case with Lipschitz/H\"{o}lder continuous payoff functions \citep{kleinberg2004nearly,hadiji2019polynomial}.
We provide detailed discussion in \cref{sec:extension}.

\subsection{Computational Oracles}
The first step towards designing computationally efficient algorithms is to identify reasonable oracle models to access the sets of regression functions or actions. Otherwise, enumeration over regression functions or actions (both can be exponentially large) immediately invalidate the computational efficiency. 
We consider two common oracle models: a regression oracle and a sampling oracle.

\paragraph{The regression oracles}
A fruitful approach to designing efficient contextual bandit algorithms is through reduction to supervised regression with the class $\cF$ \citep{foster2020beyond, simchi2021bypassing, foster2020adapting, foster2021instance}. 
Following \citet{foster2020beyond}, we assume that we have access to an \emph{online} regression oracle \sqalgtext, which is an algorithm for sequential predication under square loss.
More specifically, the oracle operates in the following protocol: At each round $t \in [T]$, the oracle makes a prediction $\wh f_t$, then receives context-action-loss tuple $(x_t, a_t, \ell_t(a_t))$.
The goal of the oracle is to accurately predict the loss as a function of the context and action, and we evaluate its performance via the square loss $\prn{\wh f_t(x_t,a_t) - \ell_t(a_t)}^2$.
We measure the oracle's cumulative performance through the square-loss regret to $\cF$, which is formalized below.

\begin{assumption}
\label{assumption:regression_oracle}
The regression oracle \sqalgtext guarantees that, with probability at least $1-\delta$, for any (potentially adaptively chosen) sequence $\curly*{(x_t, a_t, \ell_t(a_t))}_{t=1}^T$, 
\begin{align*}
	\E \Bigg[\sum_{t=1}^T &  \prn*{\wh f_t(x_t, a_t) - \ell_t(a_t)}^2 - 
      \inf_{f \in \cF} \sum_{t=1}^T \prn*{f(x_t, a_t) - \ell_t(a_t)}^2 \Bigg]
    \leq \regsq(T, \delta),
\end{align*}
for some (non-data-dependent) function $\regsq(T, \delta)$.
\end{assumption}

Sometimes it's useful to consider a \emph{weighted} regression oracle, where the square errors are weighted differently. It is shown in \citet{foster2020adapting} (Theorem 5 therein) that any regression oracle satisfies \cref{assumption:regression_oracle} can be used to generate a weighted regression oracle that satisfies the following assumption.

\begin{assumption}
\label{assumption:regression_oracle_weighted}
The regression oracle \sqalg guarantees that, with probability at least $1-\delta$, for any (potentially adaptively chosen) sequence $\curly*{(w_t, x_t, a_t, \ell_t(a_t))}_{t=1}^T$, 
\begin{align*}
	\E \Bigg[ \sum_{t=1}^T & w_t  \prn*{\wh f_t(x_t, a_t) - \ell_t(a_t)}^2 -
     \inf_{f \in \cF} \sum_{t=1}^T w_t \prn*{f(x_t, a_t) - \ell_t(a_t)}^2 \Bigg]
     \leq \E \sq*{ \max_{t \in [T]} w_t }  \regsq(T, {\delta}),
\end{align*}
for some (non-data-dependent) function $\regsq(T, \delta)$.
\end{assumption}

{For either regression oracle,} we let $\Tsq$ denote an upper bound on the time to (i) query the oracle's estimator $\wh f_t$ with context-action pair $(x_t,a)$ and receive its predicated value $\wh f_t(x_t,a) \in [0,1]$; (ii) query the oracle's estimator $\wh f_t$ with context $x_t$ and receive its argmin action $\wh a_t = \argmin_{a \in \cA} \wh f_t(x_t,a)$; and (iii) update the oracle with example $(x_t, a_t, r_t(a_t))$.
We let $\Msq$ denote the maximum memory used by the oracle throughout its execution.

Online regression is a well-studied problem, with known algorithms for many model classes \citep{foster2020beyond, foster2020adapting}: including linear models \citep{hazan2007logarithmic}, generalized linear models \citep{kakade2011efficient}, non-parametric models \citep{gaillard2015chaining}, and beyond. 
Using Vovk's aggregation algorithm \citep{vovk1998game}, one can show that $\regsq(T, \delta) = O(\log \prn{\abs{\cF} /\delta})$ for any finite set of regression functions $\cF$, which is the canonical setting studied in contextual bandits \citep{langford2007epoch, agarwal2012contextual}.
In the following of this paper, we use abbreviation $\regsq(T) \ldef \regsq(T, T^{-1})$, and will keep the $\regsq(T)$ term in our regret bounds to accommodate for general set of regression functions.

\paragraph{The sampling oracles}
In order to design algorithms that work with large/continuous action spaces, we assume access to a
sampling oracle \samplealgtext to get access to the action space. 
In particular, the oracle \samplealgtext returns an action $a \sim \mu$ randomly drawn according to the base probability measure $\mu$ over the action space $\cA$.  
We let $\Tsample$ denote a bound on the runtime of single query to the oracle; and let $\Msample$ denote the maximum memory used by the oracle. 

\paragraph{Representing the actions} We use $b_\cA$ to denote the number of bits required to represent any action $a\in \cA$, which scales with $O(\log\abs{\cA})$ with a finite set of actions and $\wt O(d)$ for actions represented as vectors in $\R^d$.
Tighter bounds are possible with additional structual assumptions. Since representing actions is a minimal assumption, we hide the dependence on $b_\cA$ in big-$\bigoh$ notation for our runtime and memory analysis.

\subsection{Additional Notation}
    We adopt non-asymptotic big-oh notation: For functions
	$f,g:\cZ\to \R_{+}$, we write $f=\bigoh(g)$ (resp. $f=\bigom(g)$) if there exists a constant
	$C>0$ such that $f(z)\leq{}Cg(z)$ (resp. $f(z)\geq{}Cg(z)$)
        for all $z\in\cZ$. We write $f=\bigoht(g)$ if
        $f=\bigoh(g\cdot\mathrm{polylog}(T))$, $f=\bigomt(g)$ if $f=\bigom(g/\polylog(T))$.
         We use $\lesssim$ only in informal statements to
highlight salient elements of an inequality.

  For an integer $n\in\bbN$, we let $[n]$ denote the set
        $\{1,\dots,n\}$.  
        For a set $\cZ$, we let
        $\Delta(\cZ)$ denote the set of all Radon probability measures
        over $\cZ$. 
        We let $\unif(\cZ)$ denote the uniform distribution/measure over 
        $\cZ$.
	We let $\indic_{z}\in\Delta(\cZ)$ denote the delta distribution on $z$.

\section{Efficient Algorithm with Smooth Regret}
\label{sec:alg}

We design an oracle-efficient (\smthigw, \cref{alg:smooth}) algorithm that achieves a $\sqrt{T}$-type regret under the \smth regret defined in \cref{eq:smooth_regret}. 
We focus on the case when the smoothness level $h>0$ is known in this section,
and leave the design of adaptive algorithms in \cref{sec:adaptive}.

\cref{alg:smooth} contains the pseudo code of our proposed \smthigw algorithm, which deploys a smoothed sampling distribution to balance exploration and exploitation.
At each round $t \in [T]$, the learner observes the context $x_t$ from the environment and obtains the estimator $\widehat f_t$ from the regression oracle \sqalgtext.
It then constructs a sampling distribution $P_t$ by mixing a smoothed distribution constructed using the \emph{inverse gap weighting} (IGW) technique \citep{abe1999associative, foster2020beyond} and a delta mass at the greedy action $\wh a_t \ldef \argmin_{a \in \cA} \wh f_t(x_t, a)$.
The algorithm samples an action $a_t \sim P_t$ and then update the regression oracle \sqalgtext. 
The key innovation of the algorithm lies in the construction of the smoothed IGW distribution, which we explain in detail next.

\begin{algorithm}[]
	\caption{\smthigw}
	\label{alg:smooth} 
	\renewcommand{\algorithmicrequire}{\textbf{Input:}}
	\renewcommand{\algorithmicensure}{\textbf{Output:}}
	\newcommand{\algorithmicbreak}{\textbf{break}}
    \newcommand{\BREAK}{\STATE \algorithmicbreak}
	\begin{algorithmic}[1]
		\REQUIRE Exploration parameter $\gamma > 0$, online regression oracle \sqalg.
		\FOR{$t = 1, 2, \dots, T$}
		\STATE Observe context $x_t$.
		\STATE Receive $\widehat f_t$ from regression oracle \sqalg.
		\STATE Get $\widehat a_t \gets \argmin_{a \in \cA} \widehat f_t(x_t, a)$.
		\STATE Define 
		\begin{align}
		P_t \ldef M_t +  (1-M_t(\cA)) \cdot \indic_{\wh a_t}, 
    \label{eq:sampling_dist}
		\end{align}
		where {$M_t$ is the measure defined in \cref{eq:abe_long_measure}}
		\STATE Sample $a_t \sim P_t$ and observe loss $\ell_t(a_t)$. \algcommentlight{This can be done efficiently via \cref{alg:reject_sampling}.}
        \STATE  Update \sqalg with $(x_t, a_t, \ell_t(a_t))$ 
		\ENDFOR
	\end{algorithmic}
\end{algorithm}

\paragraph{Smoothed variant of IGW} 
The IGW technique was previously used in the finite-action contextual bandit setting \citep{abe1999associative, foster2020beyond}, 
which assigns a probability mass to every action $a \in \cA$ inversely proportional to the estimated loss gap $(\wh f(x,a) - \wh f(x, \wh a))$. 
To extend this strategy to continuous action spaces we leverage Radon-Nikodym derivatives.  
Fix any constant $\gamma > 0$, we define a IGW-type function as 
\begin{align}
    m_t(a) \ldef \frac{1}{1+ h \gamma(\widehat f_t(x_t, a) - \widehat f_t(x_t, \widehat a_t))}. \label{eq:density_m}
\end{align} 
For any $\omega \in \sgm$, we then define a new measure 
\begin{align}
    M_t(\omega) \ldef \int_{a \in \omega} m_t(a) \, d \mu (a) \label{eq:abe_long_measure}
\end{align}
of the measurable action space $(\cA, \sgm)$, where $m(a)  = \frac{dM}{d\mu}(a)$ serves as the Radon-Nikodym derivative between the new measure $M$ and the base measure $\mu$.  
Since $m_t(a) \leq 1$ by construction, we have $M_t(\cA) \leq 1$, i.e., $M_t$ is a sub-probability measure.
\smthigw plays a probability measure $P_t \in \Delta(\cA)$ by mixing the sub-probability measure $M_t$ 
with a delta mass at the greedy action $\wh a_t$, as in \cref{eq:sampling_dist}.

\begin{algorithm}[]
	\caption{Rejection Sampling for IGW}
	\label{alg:reject_sampling} 
	\renewcommand{\algorithmicrequire}{\textbf{Input:}}
	\renewcommand{\algorithmicensure}{\textbf{Output:}}
	\newcommand{\algorithmicbreak}{\textbf{break}}
    \newcommand{\BREAK}{\STATE \algorithmicbreak}
	\begin{algorithmic}[1]
		\REQUIRE Sampling oracle \samplealgtext, greedy action $\wh a_t$, Radon-Nikodym derivative $m_t(a)$.
		\STATE Draw $a \sim \mu$ from sampling oracle \samplealgtext.
		\STATE Sample $Z$ from a Bernoulli random distribution with mean $m_t(a)$. 
		\IF{$Z=1$}
		\STATE Take action $a$.
	    \ELSE
		\STATE Take action $\wh a_t$.
		\ENDIF
	\end{algorithmic}
\end{algorithm}

\paragraph{Efficient sampling} We now discuss how to sample from the distribution of \cref{eq:sampling_dist} using a single call to the sampling oracle, via rejection sampling. We first randomly sample an action $a \sim \mu$ from the sampling oracle \samplealgtext and with respect to the base measure $\mu$. We then compute $m_t(a)$ in \cref{eq:density_m} with two evaluation calls to $\wh f_t$, one at $\wh f_t(x_t, {a})$ and the other at $\wh f_t(x_t, \wh a_t)$. Finally, we sample a random variable $Z$ from a Bernoulli distribution with expectation $m_t(a)$ and play either action $\wh{a}_t$ or action $a$ depending upon the realization of $Z$.
One can show that the sampling distribution described above coincides with the distribution defined in \cref{eq:sampling_dist} (\cref{prop:reject_sampling}).\footnote{The same idea can be immediately applied to the case of sampling from the IGW distribution with finite number of actions \citep{foster2020beyond}.}
We present the pseudo code for rejection sampling in \cref{alg:reject_sampling}.

\begin{proposition}
	\label{prop:reject_sampling}
    The sampling distribution generated from \cref{alg:reject_sampling}
    coincides with the sampling distribution defined in \cref{eq:sampling_dist}. 
\end{proposition}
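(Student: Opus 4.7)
The plan is a direct law-of-total-probability calculation. Let $A$ be the (random) action returned by \cref{alg:reject_sampling}, and fix any measurable set $\omega \in \sgm$. By construction, $A = a$ when $Z = 1$ (where $a \sim \mu$ is the draw from \samplealgtext and $Z \mid a \sim \mathrm{Bernoulli}(m_t(a))$), and $A = \wh a_t$ when $Z = 0$. I will compute $\Pr(A \in \omega)$ by conditioning on $Z$ and show it equals $P_t(\omega)$ as defined in \cref{eq:sampling_dist}.

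Concretely, first I would write
\begin{align*}
\Pr(A \in \omega) \;=\; \Pr(Z=1,\, a \in \omega) \;+\; \Pr(Z=0)\cdot \indic_{\wh a_t}(\omega).
\end{align*}
For the first term, condition on $a$ and use $\Pr(Z=1 \mid a) = m_t(a)$ (valid since $m_t(a) \in [0,1]$ by \cref{eq:density_m}) together with $a \sim \mu$ to get
\begin{align*}
\Pr(Z=1,\, a \in \omega) \;=\; \int_{\omega} m_t(a)\, d\mu(a) \;=\; M_t(\omega),
\end{align*}
where the last equality is exactly the definition of $M_t$ in \cref{eq:abe_long_measure}. For the second term, the same reasoning (with $\omega = \cA$) gives $\Pr(Z=1) = M_t(\cA)$, hence $\Pr(Z=0) = 1 - M_t(\cA)$.

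Substituting yields $\Pr(A \in \omega) = M_t(\omega) + (1 - M_t(\cA))\cdot \indic_{\wh a_t}(\omega) = P_t(\omega)$, which matches \cref{eq:sampling_dist}. Since $\omega$ was an arbitrary measurable set, the laws of $A$ and $P_t$ coincide, completing the proof. There is no real obstacle here: $m_t \le 1$ (hence a legitimate Bernoulli parameter) and $M_t(\cA) \le 1$ (hence a legitimate mixture weight) are both immediate from \cref{eq:density_m} and the construction of $M_t$; the only thing to be careful about is treating the atom at $\wh a_t$ correctly via the indicator $\indic_{\wh a_t}(\omega)$, so that the displayed identity holds for every $\omega \in \sgm$ rather than only for $\mu$-continuous sets.
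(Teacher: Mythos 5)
Your proof is correct and follows essentially the same route as the paper's: compute the law of the returned action by conditioning on the Bernoulli outcome, identify $\Pr(Z=1,\,a\in\omega)=M_t(\omega)$ and $\Pr(Z=0)=1-M_t(\cA)$, and match with \cref{eq:sampling_dist}. The only difference is cosmetic — you handle the atom at $\wh a_t$ uniformly via the indicator $\indic_{\wh a_t}(\omega)$, whereas the paper splits into the cases $\wh a_t\notin\omega$ and $\wh a_t\in\omega$.
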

\begin{proof}[\pfref{prop:reject_sampling}]
    Let $\wb P_t$ denote the sampling distribution achieved by \cref{alg:reject_sampling}.
    For any $\omega \in \sgm $, if $\wh a_t \notin \omega$, we have 
    \begin{align*}
        \wb P_t(\omega) = \int_{a \in \omega} m_t(a)\, d\mu(a) = M_t(\omega)
    \end{align*} 
    Now suppose that $\wh a_t \in \omega$: Then the rejection probability, which equals $\E_{a \sim \mu} \sq*{ 1 - m_t(a)} = 1 - M_t(\cA)$, will be added to the above expression. \end{proof}

We now state the regret bound for \smthigw in the following.
\begin{restatable}{theorem}{thmRegret}
\label{thm:regret}
Fix any smoothness level $h \in (0,1]$.
With an appropriate choice for  $\gamma > 0$, \cref{alg:smooth} ensures that 
\begin{align*}
    \regcbh(T) \leq {\sqrt{4 T \, \regsq(T)/h} },
\end{align*}
with per-round runtime $O(\Tsq + \Tsample)$ and maximum memory $O(\Msq + \Msample)$.
\end{restatable}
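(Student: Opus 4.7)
The plan is to reduce the theorem to a per-round inequality of the form
\begin{align*}
\E_{a \sim P_t}\sq{f^\star(x_t, a)} - \smthh(x_t) \lesssim \frac{1}{h\gamma} + \gamma\cdot\E_{a \sim P_t}\sq*{\prn*{\widehat f_t(x_t, a) - f^\star(x_t, a)}^2},
\end{align*}
and sum over $t \in [T]$. By realizability (\cref{asm:realizability}), a standard expansion of the squared loss shows that \cref{assumption:regression_oracle} implies $\E\sq*{\sum_t \prn*{\widehat f_t(x_t, a_t) - f^\star(x_t, a_t)}^2} \leq \regsq(T)$; since $a_t \sim P_t$ conditional on the past, this exactly bounds $\sum_t \E\,\E_{a \sim P_t}[(\widehat f_t - f^\star)^2]$. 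The summed inequality reads $\regcbh(T) \lesssim T/(h\gamma) + \gamma\,\regsq(T)$, and tuning $\gamma \propto \sqrt{T/(h\,\regsq(T))}$ yields the claimed $\sqrt{4T\,\regsq(T)/h}$ rate.

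For the per-round step, fix $t$, suppress the $x_t$-argument, and let $Q \in \cQ_h$ be an (approximate) minimizer defining $\smthh(x_t)$. Write $g(a) \ldef \widehat f_t(a) - \widehat f_t(\widehat a_t) \geq 0$ and $\phi(a) \ldef f^\star(a) - \widehat f_t(a)$, and decompose
\begin{align*}
\E_{P_t}[f^\star] - \E_Q[f^\star] = \prn*{\E_{P_t}[g] - \E_Q[g]} + \prn*{\E_{P_t}[\phi] - \E_Q[\phi]}.
\end{align*}
The IGW construction handles the $g$-piece directly: since $g(\widehat a_t) = 0$ and $P_t$ agrees with $M_t$ elsewhere, $\E_{P_t}[g] = \int \frac{g(a)}{1 + h\gamma g(a)}\,d\mu(a) \leq \frac{1}{h\gamma}$, while $\E_Q[g] \geq 0$ leaves a favorable slack we reuse below. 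The $\E_{P_t}[\phi]$ part of the second parenthesis is handled by the pointwise AM-GM $|z| \leq \frac{1}{4\gamma} + \gamma z^2$, while the $\E_Q[\phi]$ part is the delicate step: by construction $\frac{dP_t}{d\mu} \geq m_t(a) = \frac{1}{1 + h\gamma g(a)}$ and $\frac{dQ}{d\mu} \leq \frac{1}{h}$, so the importance ratio is bounded, $\frac{dQ/d\mu}{dP_t/d\mu} \leq \frac{1 + h\gamma g(a)}{h}$. Cauchy--Schwarz in $L^2(\mu)$ followed by AM-GM then gives
\begin{align*}
\E_Q[|\phi|] \leq \sqrt{\tfrac{1}{h}\prn{1 + h\gamma\,\E_Q[g]}\cdot\E_{P_t}[\phi^2]} \leq \tfrac{1}{4h\gamma} + \tfrac{\E_Q[g]}{4} + \gamma\,\E_{P_t}[\phi^2],
\end{align*}
and the stray $\E_Q[g]/4$ term is absorbed by the $-\E_Q[g]$ slack from the $g$-piece. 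Recombining produces the per-round inequality (up to universal constants tightened by a careful choice of AM-GM parameters).

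The main obstacle is precisely this change-of-measure step: a naive bound $\E_Q[|\phi|] \leq \frac{1}{h}\,\E_\mu[|\phi|]$ would convert an on-policy estimation error (which the regression oracle controls) into a $\mu$-averaged one (which it does not). The smoothed IGW density $m_t = 1/(1 + h\gamma g)$ is engineered to avoid this collapse: $P_t$ deposits enough mass wherever a plausible benchmark $Q \in \cQ_h$ would, so the importance ratio on the support of $Q$ stays controlled, and AM-GM converts the resulting overhead into exactly the $\E_Q[g]$ slack that the IGW bound already leaves available. Computational efficiency then follows immediately from \cref{prop:reject_sampling}: each round performs a constant number of evaluations of $\widehat f_t$, one query to \samplealg, one rejection-sampling decision, and one update to \sqalg, giving the stated $O(\Tsq + \Tsample)$ per-round runtime and $O(\Msq + \Msample)$ memory.
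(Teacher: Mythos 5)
Your proposal is correct in its overall mechanism and, at the architectural level, mirrors the paper's proof: your target per-round inequality is precisely a bound on the Decision-Estimation Coefficient certified by the smoothed IGW distribution, and the remaining steps (expanding the square loss under realizability to invoke \cref{assumption:regression_oracle}, summing, tuning $\gamma$) coincide with the argument in \cref{app:smooth_reg}. Where you genuinely differ is in how the per-round bound is proved. The paper eliminates the unknown $f$ via the conjugate pair $\phi(v)=\tfrac1\gamma(v-1)^2$, $\phi^\star(w)=w+\tfrac\gamma4 w^2$ (\cref{lm:dec_chi2}), which handles $\E_{P_t}\sq{\phi}-\E_Q\sq{\phi}-\tfrac{\gamma}{4}\E_{P_t}\sq{\phi^2}$ as a single completed square bounded by $\tfrac1\gamma\,\chi^2\infdiv{Q}{M_t}$; you instead split off $\E_{P_t}\sq{\phi}$ and $\E_Q\sq{\phi}$ and apply two separate AM--GM steps, the second after a Cauchy--Schwarz change of measure through $m_t$. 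Your key computations are sound and are exactly the mechanism inside the paper's proof of \cref{prop:dec_bound}: $\E_{P_t}\sq{g}\le\tfrac1{h\gamma}$ since $g(\wh a_t)=0$, the ratio bound $\tfrac{dQ/d\mu}{m_t}\le\tfrac1h+\gamma g$ (valid because $P_t\ge M_t$ and $\tfrac{dQ}{d\mu}\le\tfrac1h$), and the absorption of the stray $\E_Q\sq{g}$ term into the nonnegative IGW slack. So the route works and yields the stated $\sqrt{T\regsq(T)/h}$ rate with $O(1)$ oracle calls per round. The one quantitative caveat is the constant: with the AM--GM weights you wrote, the per-round bound is roughly $\tfrac{3/2}{h\gamma}+2\gamma\,\E_{P_t}\sq{\phi^2}$, which after tuning gives about $\sqrt{12\,T\regsq(T)/h}$, and even re-optimizing the weights your split treatment of $\E_{P_t}\sq{\phi}$ costs a separate additive term of order $\sqrt{T\regsq(T)}$, leaving you slightly above the advertised $\sqrt{4T\regsq(T)/h}$ when $h$ is close to $1$ (roughly $(1+\sqrt2)\sqrt{T\regsq(T)}$ versus $2\sqrt{T\regsq(T)}$ at $h=1$). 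To recover the exact constant, merge your two AM--GMs into the single completed square $\prn[\big]{\tfrac{dQ}{dP_t}-1}(-\phi)-\tfrac\gamma4\phi^2\le\tfrac1\gamma\prn[\big]{\tfrac{dQ}{dP_t}-1}^2$, which is what the paper's convex-conjugate lemma does; you should also account explicitly for the additive $O(1)$ incurred when passing from the oracle's high-probability guarantee to expected regret, as the paper does.
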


\paragraph{Key features of \cref{alg:smooth}}
\cref{alg:smooth} achieves $\wt O \prn{\sqrt{T /h}}$ regret, which has no dependence on the number of actions.\footnote{We focus on the canonical case studied in contextual bandits with a finite $\cF$, and view $\regsq(T) = O(\log \abs{\cF})$.}  This suggests the \cref{alg:smooth} can be used in large action spaces scenarios and only suffer regret scales with $1 /h$: the effective number of actions considered for \smth regret.
We next highlight the statistical and computational efficiencies of \cref{alg:smooth}.
\begin{itemize}
    \item \emph{Statistical optimality.} 
	    It's not hard to prove a $\wt \Omega(\sqrt{T /h})$ lower bound for the \smth regret by relating it to the standard regret under a contextual bandit problem with finite actions: (i) the \smth regret and the standard regret coincides when $h = {1}/{\abs{\cA}}$; and (ii) the standard regret admits lower bound $\wt \Omega(\sqrt{\abs{\cA}T})$ \citep{agarwal2012contextual}.
	    In \cref{sec:extension}, we further relate our \smth regret guarantee to standard regret guarantee under other scenarios and recover the minimax bounds.

    \item \emph{Computational efficiency.} \cref{alg:smooth} is oracle-efficient and enjoys per-round runtime and maximum memory that scales linearly with oracle costs. To our knowledge, this leads to the first computationally efficient general-purpose algorithm that achieves a $\sqrt{T}$-type guarantee under \smth regret. The previously known efficient algorithm applies an \greedy-type of strategy and thus only achieves a $T^{2/3}$-type regret (\citet{majzoubi2020efficient}, and with respect to a weaker version of the \smth regret).
	
\end{itemize}

\paragraph{Proof sketch for \cref{thm:regret}}
To analyze \cref{alg:smooth}, we follow a recipe introduced by \citet{foster2020beyond, foster2020adapting, foster2021statistical} based on the \emph{\dectext} (DEC, adjusted to our setting), defined as $\dec_\gamma(\cF) \ldef \sup_{\wh{f} , x } \dec_{\gamma}(\cF;\wh{f},x)$, where
\begin{align}
	 \dec_{\gamma}(\cF; \wh{f}, x) \ldef \inf_{P \in \Delta(\cA)} \sup_{f^\star \in \cF} \E_{a \sim P}  
	 \bigg[ f^\star(x, a^\star) - \smthh(x) - \frac{\gamma}{4} \cdot \prn[\big]{\wh{f}(x,a) - f^\star(x,a)}^2 \bigg] .
\label{eq:dec}
\end{align}
\citet{foster2020beyond, foster2020adapting, foster2021statistical} consider a meta-algorithm which, at each round $t$, (i) computes $\wh f_t$ by appealing to a regression oracle, (ii) computes a distribution $P_t\in\Delta(\cA)$ that solves the minimax problem in \pref{eq:dec} with $x_t$ and $\wh f_t$ plugged in, and (iii) chooses the action $a_t$ by sampling from this distribution. One can show that for any $\gamma > 0$, this strategy enjoys the following regret bound:
\begin{align}
	\regcbh(T) \approxleq T \cdot \dec_\gamma(\cF) + \gamma \cdot \regsq(T), \label{eq:decomposition}
\end{align}
More generally, if one computes a distribution that does not solve \pref{eq:dec} exactly, but instead certifies an upper bound on the DEC of the form $\dec_\gamma(\cF) \leq \wb \dec_\gamma(\cF)$, the same result holds with $\dec_\gamma(\cF)$ replaced by $\wb \dec_\gamma(\cF)$. \cref{alg:smooth} is a special case of this meta-algorithm, so to bound the regret it suffices to show that the exploration strategy in the algorithm certifies a bound on the DEC.

By applying principles of convex conjugate, we show that the IGW-type distribution of \cref{eq:sampling_dist} certifies $\dec_\gamma(\cF) \leq \frac{2}{h\gamma}$ for any set of regression functions $\cF$ (\cref{prop:dec_bound}, deferred to \cref{app:smooth_supporting}).
With this bound on DEC, We can then bound the first term in \cref{eq:decomposition} by $O(\frac{T}{h \gamma})$ and optimally tune $\gamma$ in \cref{eq:decomposition} to obtain the desired regret guarantee.

Deriving the bound on the DEC is one of our key technical contributions, where we simultaneous eliminate the dependence on both the function class and (cardinality of) the action set.
Previous bounds on the DEC assume either a restricted function class $\cF$ or a finite action set.

\section{Adapting to Unknown Smoothness Parameters}
\label{sec:adaptive}

Our results in \cref{sec:alg} shows that, with a known $h$, one can achieve  \smth regret proportional to $\sqrt{T/h }$ against the optimal smoothing kernel in $\cQ_h$. The total loss achieved by the learner is the \smth regret plus the total loss suffered by playing the optimal smoothing kernel. One can notice that these two terms go into different directions: When $h$ gets smaller, the loss suffered by the optimal smoothing kernel gets smaller, yet the regret term gets larger. It is apriori unclear how to balance these terms, and therefore desirable to design algorithms that can automatically adapt to an unknown $h \in (0, 1]$. Note it is sufficient to adapt to unknown $h \in [1/T, 1]$, as the regret bound is vacuous for $h < 1/T$.  We provide such an algorithm in this section.

\paragraph{The \corral master algorithm}
Our algorithm follows the standard master-base algorithm structure: We run multiple base algorithms with different configurations in parallel, and then use a master algorithm to conduct model selection on top of base algorithms. The goal of the master algorithm is to balance the regret among base algorithms and eventually achieve a performance that is ``close'' to the best base algorithm (whose identity is unknown). We use the classical \corral algorithm \citep{agarwal2017corralling} as the master algorithm and initiate a collection of $B = \ceil*{\log T}$ (modified) \cref{alg:smooth} as base algorithms. More specifically, for $b=1,2,\dots,B$, each base algorithm is initialized with smoothness level $h_b = 2^{-b}$. For any $h^\star \in [1/T,1]$, one can notice that there exists a base algorithm $i^\star$ that suits well to this (unknown) $h^\star$ in the sense that $h_{b^\star} \leq h^\star \leq 2 h_{b^\star}$. The goal of the master algorithm is thus to adapt to the base algorithm indexed by $b^\star$.

We provide a brief description of the \corral master algorithm, and direct the reader to \citet{agarwal2017corralling} for more details. The master algorithm maintains a distribution $q_t \in \Delta([B])$ over base algorithms. At each round, the master algorithm sample a base algorithm $I_t \sim q_t$ and passes the context $x_t$, the sampling probability $q_{t,I_t}$ and parameter $\rho_{t,I_t} \coloneqq 1/ \min_{i \leq t} q_{t,I_t}$ into the base algorithm $I_t$. The base algorithm $I_t$ then performs its learning process: it samples an arm $a_t$, observes its loss $\ell_t(a_{t,I_t})$, and then updates its internal state. The master algorithm is updated with respect to the importance-weighted loss $\frac{\ell_t(a_{t,I_t})}{q_{t,I_t}}$ and parameter $\rho_{t,I_t}$. In order to obtain theoretical guarantees, the base algorithms are required to be stable, which is defined as follows.

\begin{definition}
\label{def:stable}
Suppose the base algorithm indexed by $b$ satisfies---when implemented alone---regret guarantee $\regcbhb(T) \leq R_{b}(T)$ for some non-decreasing $R_{b}(T):\N_+ \rightarrow \R_+$. 
Let $\regimph$ denote the \emph{importance-weighted} regret for base algorithm  $b$, i.e.,
\begin{align*}
\regimphb(T) \ldef 
\E \sq*{ \sum_{t = 1}^T \frac{\ind(I_t = b)}{q_{t,b}} \paren{ f^\star(x_t,a_t) - \smthhb(x_t) } }.
\end{align*}
The base algorithm $b$ is called $(\alpha, R_b(T))$ stable if 
    $\regimphb (T) \leq \E \sq*{\rho^\alpha_{T,b}} R_b(T)$. 
\end{definition}

\paragraph{A stable base algorithm}
Our treatment is inspired by \citet{foster2020adapting}. 
Let $\prn{\tau_1, \tau_2, \ldots} \subseteq [T]$ denote the time steps when the base algorithm $b$ is invoked,
i.e., when $I_t = b$. When invoked, the base algorithm receives $(x_t, q_{t,b}, \rho_{t,b})$ from the master algorithm. The base algorithm then sample from a distribution similar to \cref{eq:sampling_dist} but with a customized learning rate $\gamma_{t,b} \ldef \sqrt{8T/ (h_b \cdot \rho_{t,b} \cdot \regsq(T))}$. 
After observing the loss $\ell_t(a_{t, b})$, the base algorithm then updates the weighted regression oracle satisfying \cref{assumption:regression_oracle_weighted}. Our modified algorithm is summarized in \cref{alg:stable}.

\begin{algorithm}[H]
	\caption{Stable Base Algorithm (Index $b$)}
	\label{alg:stable} 
	\renewcommand{\algorithmicrequire}{\textbf{Input:}}
	\renewcommand{\algorithmicensure}{\textbf{Output:}}
	\newcommand{\algorithmicbreak}{\textbf{break}}
    \newcommand{\BREAK}{\STATE \algorithmicbreak}
	\begin{algorithmic}[1]
		\REQUIRE Weighted online regression oracle \sqalg.
		\STATE Initialize weighted regression oracle \sqalg.
		\FOR{$t \in \prn{\tau_1, \tau_2, \ldots} $}
		\STATE Receive context $x_t$, probability $q_{t,b}$ and parameter $\rho_{t,b}$ from the master algorithm.
		\STATE Receive $\widehat f_{t,b}$ from the \emph{weighted} online regression oracle \sqalg.
		\STATE Get $\widehat a_{t,b} \gets \argmin_{a \in \cA} \widehat f_{t,b}(x_t, a)$.
		\STATE Define $\gamma_{t,b} \ldef \sqrt{8 T/ (h_b \cdot \rho_{t,b} 
		\cdot \regsq(T))}$ and $w_{t,b} \ldef \ind(I_t=b) \cdot  \gamma_{t,b}/q_{t,b}$.
		\STATE Define $P_{t,b} \ldef M_{t,b} +  (1-M_{t,b}(\cA)) \cdot  \indic_{\wh a_{t,b}}$ according to \cref{eq:sampling_dist} but with $\gamma_{t,b}$ defined above.
        \STATE Sample $a_{t,b} \sim P_{t,b}$ and observe loss $\ell_t(a_{t,b})$. \algcommentlight{This can be done efficiently via \cref{alg:reject_sampling}.}
        \STATE  Update the weighted regression oracle \sqalg with $(w_{t,b}, x_t, a_t, \ell_t(a_{t,b}))$ 
		\ENDFOR
	\end{algorithmic}
\end{algorithm}

\begin{restatable}{proposition}{propStable}
\label{prop:stable}
For any $b \in [B]$, \cref{alg:stable} is $\prn*{\frac{1}{2},\sqrt{4T \, \regsq(T)/h_b}}$-stable, with per-round runtime $O(\Tsq + \Tsample)$ and maximum memory $O(\Msq + \Msample)$.
\end{restatable}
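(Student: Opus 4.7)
The plan is to adapt the DEC-based analysis underlying \cref{thm:regret} to the importance-weighted setting of the master--base framework, with the main new ingredient being the time-varying learning rate $\gamma_{t,b}$. First I would collapse the importance weight using the master's sampling rule: since $\E[\ind(I_t=b)/q_{t,b}\mid\mathcal{H}_t]=1$ and, conditional on $I_t=b$, the played arm satisfies $a_t\sim P_{t,b}$, the tower property gives
\begin{align*}
\regimphb(T) \;=\; \sum_{t=1}^T \E\bigl[\,\E_{a\sim P_{t,b}}[f^\star(x_t,a)-\smthhb(x_t)]\,\bigr],
\end{align*}
which reduces the analysis to bounding the per-round expected smooth regret of the base's own sampling distribution $P_{t,b}$ at smoothness level $h_b$.

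Next I would invoke the DEC bound \cref{prop:dec_bound}: since $P_{t,b}$ is exactly the smoothed-IGW distribution of \cref{eq:sampling_dist} instantiated with parameter $\gamma_{t,b}$ and smoothness $h_b$, under \cref{asm:realizability} it certifies
\begin{align*}
\E_{a\sim P_{t,b}}[f^\star(x_t,a)-\smthhb(x_t)] \;\leq\; \frac{2}{h_b\gamma_{t,b}} \;+\; \frac{\gamma_{t,b}}{4}\,\E_{a\sim P_{t,b}}\bigl[(\wh f_{t,b}(x_t,a)-f^\star(x_t,a))^2\bigr].
\end{align*}
For the square-error sum I would re-introduce the importance weight in the reverse direction: with $w_{t,b}=\ind(I_t=b)\gamma_{t,b}/q_{t,b}$ from \cref{alg:stable}, the tower identity gives
\begin{align*}
\sum_{t=1}^T \E\!\left[\tfrac{\gamma_{t,b}}{4}\E_{a\sim P_{t,b}}[(\wh f_{t,b}(x_t,a)-f^\star(x_t,a))^2]\right] \;=\; \tfrac14\,\E\!\left[\sum_{t=1}^T w_{t,b}\bigl(\wh f_{t,b}(x_t,a_t)-f^\star(x_t,a_t)\bigr)^2\right].
\end{align*}
By the weighted oracle guarantee (\cref{assumption:regression_oracle_weighted}), together with the standard identity $\E[(\wh f-\ell_t)^2-(f^\star-\ell_t)^2\mid x_t,a_t]=(\wh f-f^\star)^2$ that follows from \cref{asm:realizability} (and choosing $f=f^\star$ in the infimum), this sum is bounded by $\tfrac14\,\E[\max_t w_{t,b}]\regsq(T)$.

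The crucial step is then controlling $\max_t w_{t,b}$. Because $1/q_{t,b}\leq \rho_{t,b}$ by definition of $\rho_{t,b}$ and $\rho_{t,b}$ is non-decreasing in $t$, the specific choice $\gamma_{t,b}=\sqrt{8T/(h_b\rho_{t,b}\regsq(T))}$ from \cref{alg:stable} produces
\begin{align*}
\max_{t\in[T]} w_{t,b} \;\leq\; \max_{t\in[T]}\rho_{t,b}\gamma_{t,b} \;=\; \max_{t\in[T]} \sqrt{8T\rho_{t,b}/(h_b\regsq(T))} \;\leq\; \sqrt{8T\rho_{T,b}/(h_b\regsq(T))}.
\end{align*}
Plugging this back bounds the square-error sum by $O(\E[\rho_{T,b}^{1/2}]\sqrt{T\regsq(T)/h_b})$, and the leading term $\sum_t 2/(h_b\gamma_{t,b}) = \sum_t\sqrt{\rho_{t,b}\regsq(T)/(2Th_b)}$ is likewise $O(\E[\rho_{T,b}^{1/2}]\sqrt{T\regsq(T)/h_b})$ after summing $T$ terms and using monotonicity of $\rho_{t,b}$. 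Combining the two bounds and tuning constants as in \cref{thm:regret} yields $\regimphb(T)\leq \E[\rho_{T,b}^{1/2}]\sqrt{4T\regsq(T)/h_b}$, which is exactly the claimed $(1/2,\sqrt{4T\regsq(T)/h_b})$-stability. The runtime and memory bounds follow as in \cref{thm:regret}: each invoked round performs one call to \cref{alg:reject_sampling} (hence one $\samplealg$ call plus $O(\Tsq)$ oracle evaluations) plus one weighted-oracle update, totalling $O(\Tsq+\Tsample)$ time and $O(\Msq+\Msample)$ memory.

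The hard part is the $\alpha=1/2$ bookkeeping via $\max_t w_{t,b}$: the round-dependent $\gamma_{t,b}$ must decrease in $\rho_{t,b}$ at precisely the rate $1/\sqrt{\rho_{t,b}}$, so that the importance-weight inflation $1/q_{t,b}\leq \rho_{t,b}$ inside $w_{t,b}$ is compensated just enough to leave a $\sqrt{\rho_{T,b}}$ factor---rather than $\rho_{T,b}$, which would only give $\alpha=1$ stability and break the master algorithm's regret calculation. Getting this compensation right, instead of using a fixed $\gamma$ as in \cref{alg:smooth}, is the conceptual novelty compared to the proof of \cref{thm:regret}.
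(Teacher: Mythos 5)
Your proposal is correct and follows essentially the same route as the paper's proof: decompose the importance-weighted regret via \cref{prop:dec_bound} with the time-varying $\gamma_{t,b}$, charge the estimation term to the weighted oracle of \cref{assumption:regression_oracle_weighted}, and exploit monotonicity of $\rho_{t,b}$ so that $\max_t w_{t,b}\le \gamma_{T,b}\rho_{T,b}$ leaves only a $\sqrt{\rho_{T,b}}$ factor; collapsing the weight $\ind(I_t=b)/q_{t,b}$ by the tower property first and re-introducing it for the square-loss term is just a cosmetic reordering of the paper's argument. The only omission is bookkeeping for the oracle's failure event (the paper conditions on the good event and absorbs an additive $O(1)$ into the constant $\sqrt{4T\,\regsq(T)/h_b}$), which is minor.
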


We now provide our model selection guarantees that adapt to unknown smoothness parameter $h \in (0,1]$. The result directly follows from combining the guarantee of \corral \citep{agarwal2017corralling} and our stable base algorithms.
\begin{restatable}{theorem}{thmAdaptive}
\label{thm:adaptive}
Fix learning rate $\eta \in (0,1]$, the \corral algorithm with \cref{alg:stable} as base algorithms guarantees that
\begin{align*}
     \regcbh(T) = \wt O \prn*{ \frac{1}{\eta} + \frac{\eta \, T  \, \regsq (T) }{h} }, \forall h \in (0,1].
\end{align*}
The \corral master algorithm has per-round runtime $\wt O(\Tsq + \Tsample)$ and maximum memory $\wt O(\Msq + \Msample)$.
\end{restatable}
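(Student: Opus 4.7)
The plan is to combine the stability property of \cref{alg:stable} established in \pref{prop:stable} with the standard regret guarantee of the \corral master algorithm from \citet{agarwal2017corralling}. A useful preliminary observation is that the smoothed-loss benchmark is monotone in the smoothness level: since $\cQ_{h'} \supseteq \cQ_h$ whenever $h' \leq h$, we have $\mathsf{Smooth}_{h'}(x) \leq \mathsf{Smooth}_h(x)$ pointwise. Consequently, for any target $h \in [1/T, 1]$, picking $b^\star \in [B]$ with $h_{b^\star} \in [h/2, h]$---which always exists because $h_b = 2^{-b}$ and $B = \lceil \log T \rceil$ covers this range---yields
\[
\regcbh(T) \leq \E\sq*{\sum_{t=1}^T f^\star(x_t, a_t) - \mathsf{Smooth}_{h_{b^\star}}(x_t)}.
\]
Thus it suffices to bound the master's regret against the benchmark corresponding to $h_{b^\star}$ and then substitute $1/h_{b^\star} \leq 2/h$.

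Next I would invoke the standard \corral decomposition, which for $(\alpha, R_b(T))$-stable bases asserts, up to polylogarithmic factors in $T$,
\begin{align*}
\regcbhb(T) \leq \wt O\prn*{\tfrac{B}{\eta}} + T\eta \; - \; \tfrac{\E[\rho_{T,b^\star}]}{c\, \eta \log T} \; + \; \E\brk*{\rho_{T,b^\star}^{\alpha}} \cdot R_{b^\star}(T)
\end{align*}
for an absolute constant $c$. Specializing to $\alpha = 1/2$ and $R_{b^\star}(T) = \sqrt{4T\,\regsq(T)/h_{b^\star}}$ from \pref{prop:stable}, I apply Young's inequality to the cross term via $\E[\rho_{T,b^\star}^{1/2}]\, R_{b^\star}(T) \leq \lambda\, \E[\rho_{T,b^\star}] + R_{b^\star}(T)^2/(4\lambda)$ with $\lambda = 1/(c\, \eta \log T)$. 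The first piece exactly cancels the negative $\E[\rho_{T,b^\star}]$ term, while the second is $\wt O(\eta\, R_{b^\star}(T)^2) = \wt O(\eta T\, \regsq(T)/h)$. Combining this with $B = O(\log T)$ and absorbing the residual $T\eta$ into the dominant $\eta T\, \regsq(T)/h$ term (which is always at least $T\eta$ in the only regime of interest, $\regsq(T)/h \geq 1$) yields the claimed $\wt O(1/\eta + \eta T\, \regsq(T)/h)$ bound.

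The runtime and memory claims are read off the implementation. At each round the master draws a single index $I_t \sim q_t$ and invokes only that base, which by \pref{prop:stable} costs $O(\Tsq + \Tsample)$ time and $O(\Msq + \Msample)$ memory; the master itself updates a distribution over $B = O(\log T)$ bases via the log-barrier online-mirror-descent step, incurring $\wt O(1)$ extra arithmetic per round. Summing over the $O(\log T)$ stored base instances gives total per-round runtime $\wt O(\Tsq + \Tsample)$ and total memory $\wt O(\Msq + \Msample)$.

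The main obstacle, which is largely a bookkeeping exercise rather than a deep difficulty, is tuning $\lambda$ in Young's inequality so that the negative $-\E[\rho_{T,b^\star}]/(c\, \eta \log T)$ term in the \corral bound is genuinely cancelled by the $\lambda \E[\rho_{T,b^\star}]$ contribution; without this cancellation the $\E[\rho_{T,b^\star}^{1/2}]$ factor would swell to $\wt O(\sqrt{T})$ and destroy the bound. A minor subtlety is the restriction to $h \in [1/T, 1]$, but this is harmless because the claimed bound is already vacuous for $h < 1/T$.
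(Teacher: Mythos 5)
Your proposal is correct and takes essentially the same route as the paper: pick the base $b^\star$ on the geometric grid with $h_{b^\star}\in[h/2,h]$, use monotonicity of the smoothed benchmark ($\cQ_{h}\subseteq\cQ_{h_{b^\star}}$), and combine the $\bigl(\tfrac12,\sqrt{4T\,\regsq(T)/h_{b^\star}}\bigr)$-stability from \pref{prop:stable} with the \corral guarantee, plus the $\wt O(B\cdot{})$ accounting for runtime and memory with $B=O(\log T)$. The only difference is cosmetic: you re-derive the \corral-with-stable-bases bound from its intermediate form (the negative $\E[\rho_{T,b^\star}]$ term cancelled via Young's inequality), whereas the paper invokes that packaged result as a black box from \citet{agarwal2017corralling}.
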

\begin{remark}
\label{rm:adaptive_2}
We keep the current form of \cref{thm:adaptive} to better generalize to other settings, as explained in \cref{sec:extension}.
With a slightly different analysis, we can recover the $\wt O \prn{ T^{\frac{1}{1+\beta}} h^{-\beta} \prn*{\log \abs*{\cF }}^{\frac{\beta}{1+ \beta}}}$ guarantee for any $\beta \in [0,1]$, which is known to be Pareto optimal \citep{krishnamurthy2020contextual}. We provide the proofs for this result in \cref{app:adaptive_2}. 
\end{remark}

\section{Extensions to Standard Regret}
\label{sec:extension}
We extend our results to various settings under the standard regret guarantee, including the discrete case with multiple best arms, and the continuous case under Lipschitz/H\"{o}lder continuity. Our results not only recover previously known minimax/Pareto optimal guarantees, but also generalize existing results in various ways. 

Although our guarantees are stated in terms of the \smth regret, they are naturally linked to the standard regret among various settings studied in this section. We thus primarily focus on the standard regret in this section. Let $a^\star_t \ldef \argmin_{a \in \cA} f^\star(x_t, a)$ denote the best action under context $x_t$.
The \emph{standard} (expected) regret is defined as
\begin{align*}
    \regcb(T) & \ldef \E \sq*{\sum_{t=1}^T  f^{\star}(x_t, a_t) - f^{\star}(x_t, a^\star_t)}. \label{eq:regret}
\end{align*}
We focus on the canonical case with a finite set of regression functions $\cF$ and consider $\regsq(\cF) = O(\log \prn{\abs*{\cF}T})$ \citep{vovk1998game}.

\subsection{Discrete Case: Bandits with Multiple Best Arms}
\citet{zhu2020regret} study a non-contextual bandit problem with a large (discrete) action set $\cA$ which might contain multiple best arms. More specifically, suppose there exists a subset of optimal arms $\cA^\star \subseteq \cA$ with cardinalities $\abs*{\cA^\star}= K^\star$ and $\abs*{\cA} = K$, the goal is to adapt to the effective number of arms $\frac{K}{K^\star}$ and minimize the standard regret. Note that one could have $\frac{K}{K^\star} \ll K$ when $K^\star$ is large.

\textbf{Existing Results.} Suppose $\frac{K}{K^\star} = \Theta (T^\alpha)$ for some $\alpha \in [0,1]$. \citet{zhu2020regret} shows that: (i) when $\alpha$ is known, the minimax regret is $\wt \Theta(T^{(1+\alpha)/2})$; and (ii) when $\alpha$ is unknown, the Pareto optimal regret can be described by $\widetilde O( \max \curly*{T^{\beta}, T^{1+\alpha - \beta} })$ for any $\beta \in [0,1)$.

\textbf{Our Generalizations.} We extend the problem to the contextual setting: We use $\cA^\star_{x_t} \subseteq \cA$ to denote the \emph{subset} of optimal arms with respect to context $x_t$, and analogously assume that $\inf_{x \in \cX} \abs*{\cA^\star_{x}} = K^\star$ and $\frac{K}{K^\star}=T^\alpha$. 

Since $\frac{K^\star}{K}$ represents the proportion of actions that are optimal, by setting $h = \frac{K^\star}{K} = T^{-\alpha}$ (and under uniform measure), we can then relate the standard regret to the smooth regret, i.e., $\regcb(T) = \regcbh(T)$. In the case when $\alpha$ is known, \cref{thm:regret} implies that $\regcb(T) = O \prn[\big]{T^{(1+\alpha)/2} \log^{1/2} \prn{\abs*{\cF}T}}$. In the case with unknown $\alpha$, by setting $\eta = T^{-\beta}$ in \cref{thm:adaptive}, we have 
\begin{align*}
    \regcb(T) = O \paren[\big]{ \max \paren{ T^\beta, T^{1+\alpha-\beta} \log \prn{\abs*{\cF}T} }}.
\end{align*}
These results generalize the known minimax/Pareto optimal results in \citet{zhu2020regret} to the contextual bandit case, up to logarithmic factors.

\subsection{Continuous Case: Lipschitz/H\"older Bandits}

\citet{kleinberg2004nearly, hadiji2019polynomial} study non-contextual bandit problems with (non-contextual) mean payoff functions $f^\star(a)$ satisfying H\"older continuity. More specifically, let $\cA = [0,1]$ (with uniform measure) and $L, \alpha > 0$ be some H\"older smoothness parameters, the assumption is that
\begin{align*}
    \abs{ f^\star(a) - f^\star(a^\prime) } \leq L \, \abs{a - a^\prime}^\alpha,
\end{align*}
for any $a, a^\prime \in \cA$. The goal is to adapt to provide standard regret guarantee that adapts to the smoothness parameters $L$ and $\alpha$.

\textbf{Existing Results.} In the case when $L,\alpha$ are known, \citet{kleinberg2004nearly} shows that the minimax regret scales as $\Theta \prn{L^{1/(2\alpha+1)} T^{(\alpha+1)/(2\alpha+1)}}$; in the case with unknown $L, \alpha$, \citet{hadiji2019polynomial} shows that the Pareto optimal regret can be described by $\wt O \prn[\big]{\max \crl{T^\beta, L^{1/(1+\alpha)} T^{1- \frac{\alpha}{1+\alpha}\beta } }}$ for any $\beta \in [\frac{1}{2},1]$.

\textbf{Our Generalizations.} We extend the setting to the contextual bandit case and make the following analogous H\"older continuity assumption,\footnote{The special case with Lipschitz continuity ($\alpha = 1$) has been previously studied in the contextual setting, e.g., see \citet{krishnamurthy2020contextual}.} i.e.,
\begin{align*}
    \abs{ f^\star(x,a) - f^\star(x, a^\prime) } \leq L \, \abs{a - a^\prime}^\alpha, \quad \forall x \in \cX.
\end{align*}
We first divide the action set $\cA=[0,1]$ into $B=\ceil*{1/h}$ consecutive intervals $\crl{I_b}_{b=1}^{B}$ such that $I_b = [(b-1){h}, {b}{h}]$. Let $b_t$ denote the index of the interval where the best action $a^\star_t \ldef \argmin_{a \in \cA}f^{\star}(x_t, a)$ lies into, i.e., $a^\star_t \in I_{b_t}$. Our \smth regret (at level $h$) provides guarantees with respect to the smoothing kernel $\unif(I_{b_t})$. Since we have $\E_{a \sim \unif(I_{b_t})} \sq{f^\star(x_t, a)} \leq f^\star(x_t,a^\star_t)  + L h^\alpha$ under H\"older continuity, the following guarantee holds under the standard regret
\begin{equation}
    \regcb(T) \leq \regcbh(T) + L h^\alpha T. \label{eq:holder} 
\end{equation}
When $L,\alpha$ are known, setting 
$h = \Theta \prn[\big]{L^{-2/(2\alpha+1)} T^{-1/(2\alpha + 1)} \log^{1/(2\alpha+1)} \prn{\abs*{\cF}T}}$ in \cref{thm:regret} (together with \cref{eq:holder}) leads to regret guarantee $O\prn[\big]{ L^{1/(2\alpha+1)} T^{(\alpha+1)/(2\alpha+1)} \log^{(\alpha/(2\alpha+1)} \prn{\abs*{\cF}T}}$, which is nearly minimax optimal \citep{kleinberg2004nearly}. In the case when $L,\alpha$ are unknown,
setting $\eta = T^{-\beta}$ in \cref{thm:adaptive} (together with \cref{eq:holder}) leads to
\begin{align*}
    \regcb(T) 
    = O \prn*{\max \crl*{T^\beta, L^{1/(1+2\alpha)} T^{1 - \frac{\alpha}{1+\alpha}\beta} \log^{\alpha/(1+\alpha)} \prn{\abs{\cF}T}}}, 
\end{align*} 
which matches the Pareto frontier obtained in \citet{hadiji2019polynomial} up to logarithmic factors.

\section{Experiments}
\label{sec:experiments}

In this section we compare our technique empirically with prior art from the bandit and contextual bandit literature.  Code to reproduce these experiments is available at \url{https://github.com/pmineiro/smoothcb}.

\subsection{Comparison with Bandit Prior Art}

\begin{figure}
    \centering
    \includegraphics[width=0.6\linewidth]{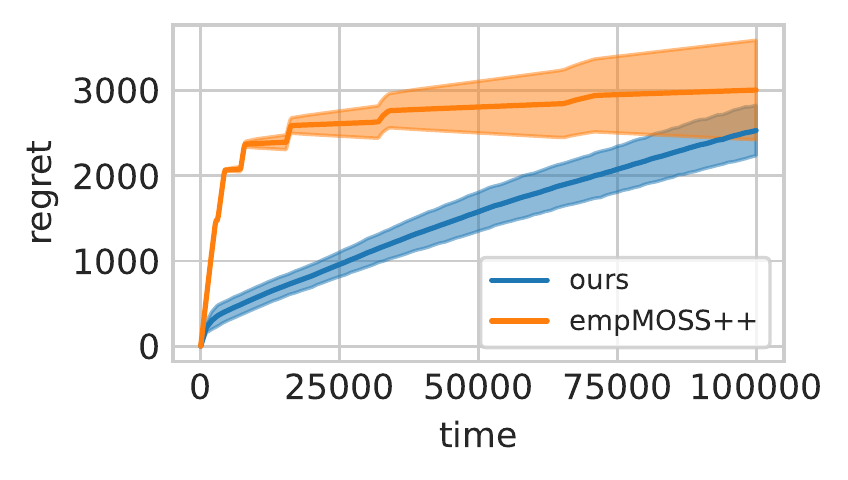}
    \caption{Comparison on a discrete action bandit dataset.  Smaller is better.  Following the display convention of \citet{zhu2020regret}, shaded areas are 38\% confidence regions.}
    \label{fig:newyorker}
\end{figure}

We replicate the real-world dataset experiment from \citet{zhu2020regret}.  The dataset consists of 10025 captions from the \emph{New Yorker Magazine} Cartoon Caption Contest and associated average ratings, normalized to [0, 1]. The caption text is discarded 
resulting in a non-contextual bandit problem with 10025 arms. When an arm is chosen, the algorithm experiences a Bernoulli loss realization whose mean is one minus the average rating for that arm.  The goal is to experience minimum regret over the planning horizon $T = 10^5$.  There are 54 arms in the dataset that have the minimal mean loss of 0.

For our algorithm, we used the uniform distribution over $[1, 2, \ldots, |\cA|]$ as a reference measure, for which $O(1)$ sampling is available.  We instantiated a tabular regression function, i.e., for each arm we maintained the empirical loss frequency observed for that arm.  
We use \corral with learning rate $\eta = 1$ and instantiated 8 subalgorithms with $\gamma h$ geometrically evenly spaced between $10^3$ and $10^6$.  These were our initial hyperparameter choices, but they worked well enough that no tuning was required.

In \cref{fig:newyorker}, we compare our technique with \texttt{empMOSS++}, the best performing technique from \citet{zhu2020regret}.  Our technique is statistically equivalent.

\subsection{Comparison with Contextual Bandit Prior Art}

We replicate the online setting from \citet{majzoubi2020efficient}, where 5 large-scale OpenML regression datasets are converted into continuous action problems on $[0, 1]$ by shifting and scaling the target values into this range.  The context $x$ is a mix of numerical and categorical variables depending upon the particular OpenML dataset.  For any example, when the algorithm plays action $a$ and the true target is $y$, the algorithm experiences loss $|y - a|$ as bandit feedback.  

We use Lebesgue measure on $[0, 1]$ as our reference measure, for which $O(1)$ sampling is available.  To maintain $O(1)$ computation, we consider regression functions with (learned) parameters $\theta$ via $
f(x, a; \theta) \ldef g\left(\wh{a}\left(x; \theta\right) - a; \theta\right)$ where, for any $\theta$, $z=0$ is a global minimizer of $g(z; \theta)$.  Subject to this constraint, we are free to choose $g(\cdot; \theta)$ and $\wh{a}(\cdot; \theta)$ and yet are ensured that we can directly compute the minimizer of our loss predictor via $\wh a(x;\theta)$. For our experiments we use a logistic loss predictor and a linear argmin predictor with logistic link: Let $\theta \ldef \prn{v; w; \xi}$, we choose $$
\begin{aligned}
g(z; \theta) \ldef \sigma\left(|w| |z| + \xi \right), \quad \text{and}
\quad \wh{a}(x; \theta) \ldef \sigma\left(v^\top x\right),
\end{aligned}
$$ where $\sigma(\cdot)$ is the sigmoid function.

\begin{table}
     \caption{Average progressive loss, scaled by 1000, on continuous action contextual bandit datasets. 95\% bootstrap confidence intervals reported.}
     \label{tab:cats}
     \vspace{0.1in}
     \centering
     \begin{tabular}{c c c c}
     \toprule
     &\texttt{\small CATS}&\texttt{\small Ours (Linear)}&\texttt{\small Ours (RFF)} \\
          \midrule
     \texttt{Cpu}&$[55, 57]$ & $[40.6, 40.7]$ & $\bold{[38.6, 38.7]}$ \\
     \texttt{Fri}&$[183, 187]$ & $[161, 163]$& $\bold{[156, 157]}$ \\
     \texttt{Price}&$[108, 110]$ & $[70.2, 70.5]$ & $\bold{[66.1, 66.3]}$ \\
     \texttt{Wis}&$[172, 174]$ & $[138, 139]$ & $\bold{[136.2, 136.6]}$ \\
     \texttt{Zur}&$[24, 26]$ & $[24.3, 24.4]$ & $[25.4, 25.5]$\\
     \bottomrule
     \end{tabular}
     
\end{table}

In \cref{tab:cats}, we compare our technique with \cats from \citet{majzoubi2020efficient}.  Following their protocol, we tune hyperparameters for each dataset to be optimal in-hindsight, and then report {95\%} bootstrap confidence intervals based upon the progressive loss of a single run.  Our algorithm outperforms \cats. 

To further exhibit the generality of our technique, we also include results for a nonlinear argmin predictor in \cref{tab:cats} (last column), which uses a Laplace kernel regressor implemented via random Fourier features~\citep{rahimi2007random} 
to predict the argmin.  
This approach achieves even better empirical performance.
 
\section{Discussion}
\label{sec:discussion}

This work presents simple and practical algorithms for contextual bandits with large---or even continuous---action spaces, continuing a line of research which assumes actions that achieve low loss are not rare. 
While our approach can be used to recover minimax/Pareto optimal guarantees under certain structural assumptions (e.g., with H\"older/Lipschitz continuity), it doesn't cover all cases.
For instance, on a large but finite action set with a linear reward function, the optimal smoothing kernel can be made to perform arbitrarily worse than the optimal action (e.g., by having one optimal action lying in an orthogonal space of all other actions); in this construction, algorithms provided in this paper would perform poorly relative to specialized linear contextual bandit algorithms.

In future work we will focus on offline evaluation.  Our technique already generates data that is suitable for subsequent offline evaluation of policies absolutely continuous with the reference measure, but only when the submeasure sample is accepted (line 4 of \cref{alg:reject_sampling}), i.e., only $M(\cA)$ fraction of the data is suitable for reuse. 
We plan to refine our sampling distribution so that the fraction of re-usable data can be increased, but presumably at the cost of additional computation.

We manage to achieve a $\sqrt{T}$-regret guarantee with respect to \smth regret, which dominates previously studied regret notions that competing against easier benchmarks.
A natural question to ask is, what is the strongest benchmark such that it is possible to still achieve a $\sqrt{T}$-type guarantee for problems with arbitrarily large action spaces?  Speculating, there might exist a regret notion which dominates \smth regret yet still admits a $\sqrt{T}$ guarantee.\looseness=-1

\bibliography{refs}

\newpage
\appendix
\section{Proofs and Supporting Results from \cref{sec:alg}}
\label{app:smooth}

This section is organized as follows. We provide supporting results in \cref{app:smooth_supporting}, then give the proof of \cref{thm:regret} in \cref{app:smooth_reg}.

\subsection{Supporting Results}
\label{app:smooth_supporting}

\subsubsection{Preliminaries}

We first introduce the concept of convex conjugate. 
For any function $\phi:\R \rightarrow \R \cup \curly*{- \infty, + \infty}$, its convex conjugate $\phi^\star: \R \rightarrow \R \cup \curly*{- \infty, + \infty}$ is defined as 
\begin{align*}
    \phi^\star(w) \ldef \sup_{v \in \R} \paren{ vw - \phi(v) }. 
\end{align*}
Since $(\phi^\star)^\star = \phi$, we have (Young-Fenchel inequality)
\begin{align}
    \phi(v) \geq vw - \phi^\star(w), \label{eq:young_fenchel}
\end{align}
for any $w \in \dom(\phi^\star)$.
\begin{lemma}
\label{prop:convex_conjugate}
$\phi(v) = \frac{1}{\gamma}(v-1)^2$ and $\phi^\star(w)=w + \frac{\gamma}{4}w^2$ are convex conjugates.
\end{lemma}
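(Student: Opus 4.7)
The plan is to verify the identity $\phi^\star(w) = w + \frac{\gamma}{4}w^2$ directly from the definition by computing the supremum in one line. Since $\phi(v) = \frac{1}{\gamma}(v-1)^2$ is a strictly convex quadratic (for $\gamma > 0$), the map $v \mapsto vw - \phi(v)$ is strictly concave in $v$, so its supremum is attained at a unique stationary point.

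First I would differentiate $vw - \phi(v) = vw - \frac{1}{\gamma}(v-1)^2$ with respect to $v$ and set the derivative to zero: $w - \frac{2}{\gamma}(v - 1) = 0$, giving the unique maximizer $v^\star = 1 + \frac{\gamma w}{2}$. Substituting back yields
\begin{align*}
\phi^\star(w) = v^\star w - \frac{1}{\gamma}(v^\star - 1)^2 = \left(1 + \tfrac{\gamma w}{2}\right)w - \frac{1}{\gamma}\left(\tfrac{\gamma w}{2}\right)^2 = w + \frac{\gamma w^2}{4},
\end{align*}
which is exactly the claimed formula. Since $\phi$ is convex, lower semicontinuous, and proper, the Fenchel-Moreau theorem gives $(\phi^\star)^\star = \phi$, so the pair is conjugate in both directions.

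There is no real obstacle here — the lemma is a one-line calculation used to set up the Young-Fenchel inequality (\cref{eq:young_fenchel}) that will later be invoked in bounding the DEC via inverse-gap weighting. The only thing to be careful about is the sign convention and the $\gamma > 0$ assumption, which ensures strict concavity of $vw - \phi(v)$ and hence validity of the first-order optimality condition.
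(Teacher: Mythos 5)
Your proposal is correct and follows essentially the same route as the paper: both compute $\phi^\star(w) = \sup_v \bigl(vw - \tfrac{1}{\gamma}(v-1)^2\bigr)$ by identifying the maximizer $v^\star = 1 + \tfrac{\gamma w}{2}$ and substituting back to obtain $w + \tfrac{\gamma}{4}w^2$. Your added remark on Fenchel--Moreau for the reverse direction is a harmless extra; the paper simply invokes $(\phi^\star)^\star = \phi$ in the same spirit.
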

\begin{proof}[\pfref{prop:convex_conjugate}]
    By definition of the convex conjugate, we have 
    \begin{align*}
        \phi^\star(w) & = \sup_{v \in \R} \prn*{-\frac{1}{\gamma} \cdot \prn*{v^2 - \prn{2 + \gamma w }v + 1}} \\
        & = w + \frac{\gamma}{4} w^2,
    \end{align*}
    where the second line follows from plugging in the maximizer $v = \frac{\gamma w }{2} + 1$. Note that the domain of $\phi^\star(w)$ is in fact $\R^d$ here. So, \cref{eq:young_fenchel} holds for any $w \in \R^d$.
\end{proof}

We also introduce the concept of $\chi^2$ divergence. For probability measures $P$ and $Q$ on the same measurable space $(\cA, \sgm)$ such that $Q \ll P$, the $\chi^2$ divergence of $Q$ from $P$ is defined as 
\begin{align*}
    \chi^2 \infdiv{Q}{P} \ldef \E_{a \sim P} \sq*{\prn*{\frac{dQ}{dP}(a) - 1}^2},
\end{align*}
where $\frac{dQ}{dP}(a)$ denotes the Radon-Nikodym derivative of $Q$ with respect to $P$,
which is a function mapping from $a$ to $\R$.

\subsubsection{Bounding the \dectext}
We aim at bounding the \dectext in this section. 
We use expression $\inf_{Q \in \cQ_h} \E_{a^\star \sim Q} \sq{f^\star(x,a^\star)}$ for $\smthh(x)$.
With this expression, we rewrite the \dectext in the following: 
With respect to any context $x\in \cX$ and estimator $\wh f $ obtained from \sqalgtext, we denote
\begin{align*}
    \dec_\gamma(\cF; \wh f, x) \ldef \inf_{P \in \Delta(\cA)} \sup_{Q \in \cQ_h} \sup_{f \in \cF} \E_{a\sim P, a^\star \sim Q} \sq*{f(x, a) - f(x, a^\star) - \frac{\gamma}{4} \cdot \prn*{\widehat f(x,a ) - f(x, a)}^2}, 
\end{align*}
and define $\dec_\gamma(\cF) \ldef \sup_{\wh f, x } \dec_\gamma(\cF; \wh f, x)$ as the \dectext. 
We remark here that $\sup_{Q \in \cQ_h} \E_{a^\star \sim Q}   \sq{ - f(x,a^\star)} =  - \inf_{Q \in \cQ_h}\E_{a\sim Q} \sq{f^\star(x,a^\star)}$ so we are still compete with the best smoothing kernel within $\cQ_h$.

We first state a result that helps eliminate the unknown $f$ function in \dectext
(and thus the $\sup_{f\in \cF}$ term), and bound \dectext by the known $\wh f$ estimator (from the regression oracle \sqalgtext) and the $\chi^2$-divergence from $Q$ to $P$ (whenever $P$ and $Q$ are probability measures).

\begin{lemma}
    \label{lm:dec_chi2}
    Fix constant $\gamma > 0$ and context $x\in\cX$ . For any measures $P$ and $Q$ such that $Q \ll P$, we have 
    \begin{align*}
        &  \sup_{f \in \cF} \E_{a\sim P, a^\star \sim Q} \sq*{f(x, a) - f(x, a^\star) - \frac{\gamma}{4} \cdot \prn*{\widehat f(x,a ) - f(x, a)}^2} \\
        & \leq \E_{a \sim P} \sq[\big]{\widehat f(x, a)} - \E_{a \sim Q} \sq[\big]{\widehat f(x, a)} +\frac{1}{\gamma} \cdot  \E_{a \sim P}\sq*{ \prn*{\frac{dQ}{dP}(a) - 1}^2 }. 
    \end{align*}
\end{lemma}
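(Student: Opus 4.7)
The plan is to eliminate the $\sup_{f\in\cF}$ by a standard ``add and subtract'' trick together with Young--Fenchel duality. First, I would write
\[
f(x,a) - f(x,a^\star) = \bigl(\wh f(x,a) - \wh f(x,a^\star)\bigr) - g(a) + g(a^\star),
\]
where $g(\cdot) \ldef \wh f(x,\cdot) - f(x,\cdot)$ is the only quantity that still depends on the unknown $f$. Taking expectations yields
\[
\E_{a\sim P, a^\star\sim Q}\sq*{f(x,a) - f(x,a^\star)} = \E_{a\sim P}\sq*{\wh f(x,a)} - \E_{a\sim Q}\sq*{\wh f(x,a)} + \E_{a^\star\sim Q}\sq*{g(a^\star)} - \E_{a\sim P}\sq*{g(a)}.
\]
So only the last two terms, together with the quadratic $-\tfrac{\gamma}{4}g(a)^2$ penalty, contain $f$.

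Next, since $Q\ll P$, the change of measure gives
\[
\E_{a^\star\sim Q}\sq*{g(a^\star)} - \E_{a\sim P}\sq*{g(a)} = \E_{a\sim P}\sq*{\prn*{\tfrac{dQ}{dP}(a) - 1}\,g(a)}.
\]
Combining, the $f$-dependent piece becomes
\[
\E_{a\sim P}\sq*{\prn*{\tfrac{dQ}{dP}(a) - 1}\,g(a) - \tfrac{\gamma}{4}\,g(a)^2}.
\]
Now I would apply the pointwise inequality $c\,y - \tfrac{\gamma}{4}y^2 \leq \tfrac{c^2}{\gamma}$ (obtained either by completing the square in $y$, or equivalently by Young--Fenchel with the conjugate pair from \cref{prop:convex_conjugate}) with $c = \tfrac{dQ}{dP}(a) - 1$ and $y = g(a)$. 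The resulting bound $\tfrac{1}{\gamma}\prn*{\tfrac{dQ}{dP}(a)-1}^2$ no longer depends on $f$, so after integrating with respect to $P$ the supremum over $f\in\cF$ is trivially preserved and we obtain exactly the claimed inequality.

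I don't expect a real obstacle here: the decomposition is forced once one notices the quadratic penalty is in $(\wh f - f)$, and the change of measure identity is the natural way to combine the $P$ and $Q$ expectations so that a pointwise square completion applies. The only mild care needed is in justifying the change of measure for arbitrary (possibly unbounded) $g$; but in our setting $f,\wh f \in [0,1]$, so $g$ is bounded and all expectations are finite, and the Radon--Nikodym derivative $\tfrac{dQ}{dP}$ exists because $Q\ll P$ by assumption.
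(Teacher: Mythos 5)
Your proof is correct and follows essentially the same route as the paper: the decomposition isolating $g=\wh f - f$, the change of measure using $Q\ll P$, and the pointwise bound $c\,y-\tfrac{\gamma}{4}y^2\leq \tfrac{c^2}{\gamma}$ is exactly the Young--Fenchel step with the conjugate pair $\phi(v)=\tfrac{1}{\gamma}(v-1)^2$, $\phi^\star(w)=w+\tfrac{\gamma}{4}w^2$ used in the paper (completing the square and the duality argument coincide here). No gaps.
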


\begin{proof}[\pfref{lm:dec_chi2}]
We omit the dependence on the context $x\in \cX$, and use abbreviations $f(a) \ldef f(x,a)$ and $\wh f(a) \ldef \wh f(x,a)$. Let $g \ldef f - \widehat f$, we re-write the expression as
\begin{align*}
    & \sup_{f \in \cF} \E_{a\sim P, a^\star \sim Q} \sq*{f( a) - f(a^\star) - \frac{\gamma}{4} \cdot \prn*{\widehat f(a ) - f(a)}^2}\\ 
    & = \sup_{g \in \cF - \widehat f} \E_{a\sim P} \sq[\big]{\widehat f(a)} - \E_{a^\star \sim Q} \sq[\big]{\widehat f(a^\star)} - \E_{a^\star \sim Q}\sq[\big]{ g(a^\star)} + \E_{a \sim P} \sq*{ g(a) - \frac{\gamma}{4} \cdot \prn*{g(a)}^2}\\
    & =  \E_{a\sim P} \sq[\big]{\widehat f(a)} - \E_{a \sim Q} \sq[\big]{\widehat f(a)} + \sup_{g \in \cF - \widehat f} \prn*{ \E_{a \sim Q}\sq[\big]{ - g(a)} -\E_{a \sim P} \sq*{ \prn{-g(a)} + \frac{\gamma}{4} \cdot \prn*{ - g(a)}^2}} \\
    & =  \E_{a\sim P} \sq[\big]{\widehat f(a)} - \E_{a \sim Q} \sq[\big]{\widehat f(a)} + \sup_{g \in \cF - \widehat f}  \E_{a \sim P}\sq*{ \frac{dQ}{dP}(a) \cdot \prn{- g(a)} -  \prn*{\prn{-g(a)} + \frac{\gamma}{4} \cdot \prn*{ - g(a)}^2}}\\
    & =  \E_{a\sim P} \sq[\big]{\widehat f(a)} - \E_{a \sim Q} \sq[\big]{\widehat f(a)} + \sup_{g \in \cF - \widehat f}  \E_{a \sim P}\sq*{ \frac{dQ}{dP}(a) \cdot \prn{- g(a)} -  \phi^\star \prn{-g(a)} },
\end{align*}
where we use the fact that $Q \ll P$ and $\phi^\star (w) = w + \frac{\gamma}{4} w^2$. Focus on the last term that depends on $g$ takes the form of the RHS of \cref{eq:young_fenchel}: Consider $v = \frac{dQ}{dP}(a)$ and $w = -g(a)$ and apply \cref{eq:young_fenchel} (with \cref{prop:convex_conjugate}) eliminates the dependence on $g$ (since it works for any $w = - g(a)$) and leads to the following bound 
\begin{align*}
    & \sup_{f \in \cF} \E_{a\sim P, a^\star \sim Q} \sq*{f( a) - f(a^\star) - \frac{\gamma}{4} \cdot \prn*{\widehat f(a ) - f(a)}^2}\\  
    & \leq \E_{a\sim P} \sq[\big]{\widehat f(a)} - \E_{a \sim Q} \sq[\big]{\widehat f(a)} + \frac{1}{\gamma} \cdot  \E_{a \sim P}\sq*{ \prn*{\frac{dQ}{dP}(a) - 1}^2 }. 
\end{align*}
\end{proof}

We now bound the \dectext with sampling distribution defined in \cref{eq:sampling_dist}. We drop the dependence on $t$ and define the sampling distribution in the generic form: Fix any constant $\gamma > 0$, context $x \in \cX$ and estimator $\wh f $, we define sampling distribution 
\begin{align}
	P \ldef M + (1 - M(\cA)) \cdot \indic_{\wh a} \label{eq:sampling_dist_gen},
\end{align}
where $\wh a \ldef \argmin_{a \in \cA} \wh f(x,a)$ and the measure $M$ is defined through $M(\omega) \ldef \int_{a \in \omega} m(a) \, d \mu(a) $ with
\begin{align}
    m(a) \ldef \frac{1}{1+ h \gamma(\widehat f(x, a) - \widehat f(x, \widehat a))}. 
    \label{eq:abe_long_measure_gen}
\end{align} 

\begin{restatable}{lemma}{propDecBound}
    \label{prop:dec_bound}
    Fix any constant $\gamma > 0$ and any set of regression function $\cF$. Let $P$ be the sampling distribution defined in \cref{eq:sampling_dist_gen}, we then have $\dec_\gamma (\cF) \leq \frac{2}{h \, \gamma}$.
\end{restatable}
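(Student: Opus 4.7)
\textbf{Proof plan for \cref{prop:dec_bound}.}
The plan is to apply \cref{lm:dec_chi2} with the specific sampling distribution $P$ from \cref{eq:sampling_dist_gen} and then bound the resulting $\chi^2$-divergence expression using the smoothness constraint $dQ/d\mu \leq 1/h$ and the pointwise shape of the IGW density $m$. Fix $x \in \cX$, $\wh f$, and write $\Delta(a) \ldef \wh f(x,a) - \wh f(x,\wh a) \geq 0$, so that $m(a) = 1/(1 + h\gamma\Delta(a))$. First I would verify the absolute continuity hypothesis needed by \cref{lm:dec_chi2}: since $m(a) \geq 1/(1+h\gamma) > 0$ everywhere, the measure $M$ dominates $\mu$ restricted to $\cA \setminus \{\wh a\}$, and any $Q \in \cQ_h$ is absolutely continuous with respect to $\mu$, so $Q \ll P$.

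Next I would compute the two ingredients in the right-hand side of \cref{lm:dec_chi2} in closed form. Writing $\wh f = \wh f(x,\wh a) + \Delta$ and using $\int q\, d\mu = 1 = M(\cA) + (1-M(\cA))$, the bias term collapses cleanly:
\begin{align*}
\E_{a\sim P}[\wh f(x,a)] - \E_{a\sim Q}[\wh f(x,a)] = \int (m - q)\Delta \, d\mu,
\end{align*}
where $q \ldef dQ/d\mu$. For the $\chi^2$-term, I would split $P$ into its absolutely continuous part $M$ and the atom $(1-M(\cA))\indic_{\wh a}$. Since $Q(\{\wh a\}) = 0$ (as $Q \ll \mu$ and we may assume $\mu$ is nonatomic at $\wh a$; the finite-action case is analogous), a direct decomposition gives
\begin{align*}
\chi^2(Q \,\|\, P) = \int_{\cA\setminus\{\wh a\}}\!\left(\frac{q}{m} - 1\right)^{\!2} m \, d\mu + (1 - M(\cA)) = \int \frac{q^2}{m}\, d\mu - 1.
\end{align*}

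The key algebraic step, and the one I would highlight, uses the smoothness constraint $q \leq 1/h$ to get $q^2/m \leq q/(hm) = q(1 + h\gamma \Delta)/h$. Substituting this into the bound from \cref{lm:dec_chi2} and combining with the bias term produces the telescoping
\begin{align*}
\int (m-q)\Delta\, d\mu + \frac{1}{\gamma}\!\left(\int \frac{q^2}{m} d\mu - 1\right) \leq \int m\Delta\, d\mu - \int q\Delta\, d\mu + \frac{1}{\gamma h}\int q\, d\mu + \int q\Delta\, d\mu - \frac{1}{\gamma},
\end{align*}
in which the two $\int q \Delta\, d\mu$ terms cancel exactly. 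Using $\int q \, d\mu = 1$ what remains is $\int m\Delta\, d\mu + \frac{1}{\gamma h} - \frac{1}{\gamma}$.

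Finally, I would bound $\int m\Delta\, d\mu$ pointwise via the elementary inequality $x/(1+cx) \leq 1/c$ for $x,c \geq 0$ (applied with $c = h\gamma$), yielding $m(a)\Delta(a) \leq 1/(h\gamma)$. Since $\mu$ is a probability measure, $\int m\Delta\, d\mu \leq 1/(h\gamma)$, and the total bound becomes $2/(h\gamma) - 1/\gamma \leq 2/(h\gamma)$. Because this holds for every $Q \in \cQ_h$, taking the supremum over $Q$ and over $x,\wh f$ yields $\dec_\gamma(\cF) \leq 2/(h\gamma)$.

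The main obstacle I anticipate is bookkeeping around the $\chi^2$-term: the sampling measure $P$ has both an absolutely continuous part and a point mass at $\wh a$, so one must carefully track the singular contribution $(1-M(\cA))$ arising from $Q$ putting zero mass on $\wh a$, and check that it exactly cancels the $+M(\cA) - 2$ arising from the integral terms. Everything else is pointwise manipulation of elementary functions.
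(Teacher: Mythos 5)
Your proposal is correct and reaches the paper's bound $2/(h\gamma) - 1/\gamma \leq 2/(h\gamma)$ via the same essential machinery: \cref{lm:dec_chi2} (Fenchel--Young with \cref{prop:convex_conjugate}), a change of measure back to $\mu$, and the same two pointwise facts, $q \leq 1/h$ and $m(a)\Delta(a) \leq 1/(h\gamma)$, with the exact cancellation of the $\int q\Delta\,d\mu$ terms. The one place you genuinely diverge from the paper is the treatment of the point mass at $\wh a$: the paper first peels off the $(1-M(\cA))\indic_{\wh a}$ component, bounds its contribution by $\wh f(\wh a) + 1/\gamma$ via AM--GM, and then applies \cref{lm:dec_chi2} to the sub-probability measure $M$ (with an acknowledged abuse of notation), whereas you apply \cref{lm:dec_chi2} once to the full probability measure $P$ of \cref{eq:sampling_dist_gen} and absorb the atom into the $\chi^2$ computation, yielding $\chi^2\infdiv{Q}{P} = \int q^2/m\,d\mu - 1$. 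Your route is a bit cleaner (no sub-probability abuse, no separate AM--GM step), at the cost of one small loose end: when $\mu$ itself has an atom at $\wh a$, one cannot assume $Q(\{\wh a\})=0$, and your identity for $\chi^2\infdiv{Q}{P}$ should be replaced by the inequality $\chi^2\infdiv{Q}{P} \leq \int q^2/m\,d\mu - 1$ (the atom's contribution $Q(\{\wh a\})^2/P(\{\wh a\})$ is dominated by the term you subtract), which is all the argument needs; with that one-line patch the proof is complete and matches the paper's guarantee.
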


\begin{proof}[\pfref{prop:dec_bound}]
    As in the proof of \cref{lm:dec_chi2}, we omit the dependence on the context $x \in \cX$ and use abbreviations $f(a) \ldef f(x,a)$ and $\wh f(a) \ldef \wh f(x,a)$.

    We first notice that for any $Q \in \cQ_h$ we have $Q \ll M$ for $M$ defined in \cref{eq:abe_long_measure_gen}: we have (i) $Q \ll \mu$ by definition, and (ii) $\mu \ll M$ (since $m(a) \geq \frac{1}{1+h\gamma} > 0$).\footnote{We thus have $Q \ll P$ as well since $P$ contains the component $M$ by definition. We will, however, mostly be working with $M$ due to its nice connection with the base measure $\mu$, as defined in \cref{eq:abe_long_measure_gen}.} 
    On the other side, however, we do not necessarily have $P \ll \mu$ for $P$ defined in \cref{eq:sampling_dist_gen}: It's possible to have $P \prn{\crl{a^\star}} > 0$ yet $\mu \prn{\crl{a^\star}} = 0$, e.g., $\mu$ is some continuous measure.
    To isolate the corner case, we first give the following decomposition for any $Q \in \cQ_h$ and $f \in \cF$.
    With $P \ldef M + (1 - M(\cA)) \cdot \indic_{\wh a}$, we have 
    \begin{align}
        & \E_{a\sim P, a^\star \sim Q} \sq*{f( a) - f(a^\star) - \frac{\gamma}{4} \cdot \prn*{\widehat f(a ) - f(a)}^2}\nonumber\\
        & = \prn{1-M(\cA)} \cdot \prn*{ f(\wh a) - \frac{\gamma}{4} \cdot \prn[\big]{\wh f(\wh a) - f(\wh a)}^2} 
        + \E_{a\sim M, a^\star \sim Q} \sq*{f( a) - f(a^\star) - \frac{\gamma}{4} \cdot \prn*{\widehat f(a ) - f(a)}^2} \nonumber\\
        & = \prn{1-M(\cA)} \cdot \prn*{ \wh f(\wh a) + \prn[\big]{f(\wh a) - \wh f(\wh a)} - \frac{\gamma}{4} \cdot \prn[\big]{\wh f(\wh a) - f(\wh a)}^2} 
        + \E_{a\sim M, a^\star \sim Q} \sq*{f( a) - f(a^\star) - \frac{\gamma}{4} \cdot \prn*{\widehat f(a ) - f(a)}^2} \nonumber\\
        & \leq \prn{1-M(\cA)} \cdot \prn[\Big]{ \wh f(\wh a) + \frac{1}{\gamma}} 
        + \E_{a\sim M, a^\star \sim Q} \sq*{f( a) - f(a^\star) - \frac{\gamma}{4} \cdot \prn*{\widehat f(a ) - f(a)}^2} \nonumber \\
        & \leq \frac{1 - M(\cA)}{\gamma} + \prn{1-M(\cA)} \cdot { \wh f(\wh a)  } 
        + \E_{a\sim M} \sq[\big]{\widehat f(a)} - \E_{a \sim Q} \sq[\big]{\widehat f(a)} + \frac{1}{\gamma} \cdot  \E_{a \sim M}\sq*{ \prn*{\frac{dQ}{dM}(a) - 1}^2 }, \label{eq:prop_dec_bound_1}
    \end{align}
    where the fourth line follows from applying AM-GM inequality and the fifth line follows from applying \cref{lm:dec_chi2} with $Q \ll M$.\footnote{With a slight abuse of notation, we use $\E_{a \sim M}[\cdot]$ denote the integration with respect to the sub-probability measure $M$.} 
We now focus on the last four terms in \cref{eq:prop_dec_bound_1}. Denote $m(a) \ldef \frac{dM}{d \mu} (a)$ and $q(a) \ldef \frac{dQ}{d \mu}(a)$, with change of measures, we have 
    \begin{align}
        & (1 - M(\cA) \cdot \prn[\big]{\wh f(\wh a)} + \E_{a\sim M} \sq[\big]{\widehat f(a)} - \E_{a \sim Q} \sq[\big]{\widehat f(a)} + \frac{1}{\gamma} \cdot  \E_{a \sim M}\sq*{ \prn*{\frac{dQ}{dM}(a) - 1}^2 } \nonumber\\
        & = \E_{a \sim \mu} \sq*{m(a) \cdot \prn[\Big]{\wh f(a) - \wh f(\wh a)} } - \E_{a \sim \mu} \sq*{q(a) \cdot \prn[\Big]{\wh f(a) - \wh f(\wh a)} } + \frac{1}{\gamma} \cdot \E_{a \sim \mu} \sq*{m(a) \cdot \prn*{\frac{q(a)}{m(a)} - 1}^2 } \nonumber \\
        & = \E_{a \sim \mu} \sq*{m(a) \cdot \prn[\Big]{\wh f(a) - \wh f(\wh a)} } - \E_{a \sim \mu} \sq*{q(a) \cdot \prn[\Big]{\wh f(a) - \wh f(\wh a)} } + \frac{1}{\gamma} \cdot \E_{a \sim \mu} \sq*{q(a) \cdot \frac{q(a)}{m(a)} - 2 q(a) + m(a)  } \nonumber \\
	& = \E_{a \sim \mu} \sq*{m(a) \cdot \prn[\Big]{\wh f(a) - \wh f(\wh a)} } + 
	\frac{1}{\gamma} \cdot \E_{a \sim Q} \brk*{\frac{q(a)}{m(a)} - \gamma \cdot \prn*{\wh f(a) - \wh f(\wh a)} } + \frac{M(\cA) - 2}{ \gamma} \nonumber \\
 \label{eq:prop_dec_bound_2}
    \end{align}
   Plugging \cref{eq:prop_dec_bound_2} into \cref{eq:prop_dec_bound_1} leads to 
   \begin{align}
	   &	   \E_{a\sim P, a^\star \sim Q}  \sq*{f( a) - f(a^\star) - \frac{\gamma}{4} \cdot \prn*{\widehat f(a ) - f(a)}^2} \nonumber \\
	& \leq
\E_{a \sim \mu} \sq*{m(a) \cdot \prn[\Big]{\wh f(a) - \wh f(\wh a)} } +
	\frac{1}{\gamma} \cdot \E_{a \sim Q} \brk*{\frac{q(a)}{m(a)} - \gamma \cdot \prn*{\wh f(a) - \wh f(\wh a)} } \nonumber \\
	& \leq  \frac{2}{h\gamma},
\label{eq:prop_dec_bound_3}
   \end{align}
   where \cref{eq:prop_dec_bound_3} follows from the fact that $m(a) \ldef \frac{dM}{d \mu}(a) = \frac{1}{1 + h \gamma \prn{\wh f(a) - \wh f(\wh a)}}$ and $q(a) \ldef \frac{dQ}{d \mu}(a) \leq \frac{1}{h}$ for any $Q \in \cQ_h$.
   This certifies that $\dec_\gamma(\cF) \leq \frac{2}{h \gamma}$.
\end{proof}

\subsection{\pfref{thm:regret}}
\label{app:smooth_reg}

\thmRegret*

\begin{proof}[\pfref{thm:regret}]
    We use abbreviation $f_t(a) \ldef f(x_t,a)$ for any $f \in \cF$.
    Let $a^\star_t$ denote the action sampled according to the best smoothing kernel within $\cQ_h$ (which could change from round to round). 
    We let $\cE$ denote the good event where the regret guarantee stated in \cref{assumption:regression_oracle} (i.e., $\regsq(T) \ldef \regsq(T, T^{-1})$) holds with probability at least  $1- T^{-1}$. Conditioned on this good event, following the analysis provided in \citet{foster2020adapting}, we decompose the contextual bandit regret as follows.
    \begin{align*}
	    \E \brk*{\sum_{t=1}^T f_t^\star(a_t) - f_t^\star(a^\star_t)}
        & = \E \sq*{\sum_{t=1}^T f_t^\star(a_t) - f_t^\star(a^\star_t) - \frac{\gamma}{4} \cdot \prn*{\wh f_t( a_t) - f_t^\star(a_t)}^2}
        + \frac{\gamma}{4} \cdot  \E \sq*{\sum_{t=1}^T \prn*{\wh f_t(a_t) - f_t^\star(a_t)}^2} \nonumber \\
        & \leq T \cdot \frac{2}{h \gamma} + \frac{\gamma}{4} \cdot  \E \sq*{\sum_{t=1}^T \prn*{\wh f_t(a_t) - f_t^\star(a_t)}^2},
    \end{align*}
    where the bound on the first term follows from \cref{prop:dec_bound}. We analyze the second term below.
    \begin{align*}
        & \frac{\gamma}{4} \cdot \E \sq*{\sum_{t=1}^T \prn*{ \prn*{\wh f_t(a_t) - \ell_t(a_t)}^2 -\prn[\Big]{f^{\star}(a_t) - \ell_t(a_t)}^2  + 2 \prn[\Big]{\ell_t(a_t) - f^\star_t(a_t)} \cdot \prn[\Big]{\wh f_t(a_t) - f^\star_t(a_t)}}} \\
        & =  \frac{\gamma}{4} \cdot \E \sq*{\sum_{t=1}^T \prn*{ \prn*{\wh f_t(a_t) - \ell_t(a_t)}^2 -\prn[\Big]{f^\star_t(a_t) - \ell_t(a_t)}^2  }} \\
        & \leq \frac{\gamma}{4} \cdot \regsq(T), 
    \end{align*}
    where on the second line follows from the fact that $\E \sq{\ell_t(a) \mid x_t} = f^\star(x_t,a)$ and $\ell_t$ is conditionally independent of $a_t$, and the third line follows from the bound on regression oracle stated in \cref{assumption:regression_oracle}. 
    As a result, we have 
    \begin{align*}
        \regcbh(T) \leq \frac{2T}{h \gamma} + \frac{\gamma}{4} \cdot \regsq(T) + O(1),
    \end{align*}
    where the additional term $O(1)$ accounts for the expected regret suffered under event  $\neg \cE$.
    Taking $\gamma = \sqrt{8 T/\prn{h \cdot \regsq(T)}}$ leads to the desired result.

    \noindent\emph{Computational complexity.}
    We now discuss the computational complexity of \cref{alg:smooth}. At each round \cref{alg:smooth} takes $O(1)$ calls to \sqalgtext to obtain estimator $\wh f_t$ and the best action $\wh a_t$. Instead of directly form the action distribution defined in \cref{eq:sampling_dist}, \cref{alg:smooth} uses \cref{alg:reject_sampling} to sample action $a_t \sim P_t$, which takes one call of the sampling oracle \samplealgtext to draw a random action and $O(1)$ calls of the regression oracle \sqalgtext to compute the mean of the Bernoulli random variable. Altogether, \cref{alg:smooth} has per-round runtime $O(\Tsq + \Tsample)$ and maximum memory $O(\Msq + \Msample)$.
\end{proof}

\section{Proofs from \cref{sec:adaptive}}
This section is organized as follows. We first prove \cref{prop:stable} in \cref{app:stable_base}, then prove \cref{thm:adaptive} in \cref{app:adaptive}.

\subsection{\pfref{prop:stable}}
\label{app:stable_base}
The proof of \cref{prop:stable} follows similar analysis as in \citet{foster2020adapting}, with minor changes to adapt to our settings.

\propStable*

\begin{proof}[\pfref{prop:stable}]
   Fix the index $b \in [B]$ of the subroutine. We use shorthands $h = h_b$, $q_t = q_{t,b}$, $\rho_t = \rho_{t,b}$, $\gamma_{t} = \gamma_{t,b}$, and so forth. 
   We also write $Z_{t} = Z_{t,b} \ldef {\ind(I_t = b)}$. Similar to the proof of \cref{thm:regret}, we use abbreviation $f_t(a) \ldef f(x_t,a)$ for any $f \in \cF$.
   Let $a^\star_t$ denote the action sampled according to the best smoothing kernel within $\cQ_h$ (which could change from round to round). 

    We let $\cE$ denote the good event where the regret guarantee stated in \cref{assumption:regression_oracle_weighted} (with $\regsq(T) \ldef \regsq(T, T^{-1})$) holds with probability at least  $1- T^{-1}$. Conditioned on this good event, similar to the proof of \cref{thm:regret} (and following \citet{foster2020adapting}), we decompose the contextual bandit regret as follows.
    \begin{align*}
    & \E \sq*{ \sum_{t = 1}^T \frac{Z_t}{q_{t}} \paren{ f^\star_t(a_t) - f_t^\star(a^\star_t) } } \\
    & = \E \sq*{ \sum_{t = 1}^T \frac{Z_t}{q_{t}} \prn*{ f^\star_t(a_t) - f_t^\star(a^\star_t)  - \frac{\gamma_t}{4}\cdot \prn*{\wh f_t(a_t) - f^\star_t(a_t)}^2} } 
    + \E \sq*{ \sum_{t=1}^T \frac{Z_t}{q_t} \cdot \frac{\gamma_t}{4}\cdot \prn*{\wh f_t(a_t) - f^\star_t(a_t)}^2}\nonumber \\
    & \leq \E \sq*{\sum_{t=1}^T \frac{Z_t}{q_t}\cdot \frac{2}{h \gamma_t}}
    + \E \sq*{ \sum_{t=1}^T \frac{Z_t}{q_t} \cdot \frac{\gamma_t}{4}\cdot \prn*{\wh f_t(a_t) - f^\star_t(a_t)}^2}\nonumber \\
    & \leq \E\sq*{\max_{t \in [T]} \gamma_t^{-1}} \cdot \frac{2T}{h} 
    + \E \sq*{ \sum_{t=1}^T \frac{Z_t}{q_t} \cdot \frac{\gamma_t}{4}\cdot \prn*{\wh f_t(a_t) - f^\star_t(a_t)}^2}\nonumber ,
    \end{align*}
    where the bound on the first term follows from \cref{prop:dec_bound} (the third line, conditioned on $Z_t$). We bound the second term next.
    \begin{align*}
    & \E \sq*{ \sum_{t=1}^T \frac{Z_t}{q_t} \cdot \frac{\gamma_t}{4}\cdot \prn*{\wh f_t(a_t) - f^\star_t(a_t)}^2}\nonumber \\
    & = \frac{1}{4} \cdot \E \sq*{\sum_{t=1}^T \frac{Z_t}{q_t} \gamma_t \prn*{ \prn*{\wh f_t(a_t) - \ell_t(a_t)}^2 -\prn[\Big]{f^\star_t(a_t) - \ell_t(a_t)}^2  + 2 \prn[\Big]{\ell_t(a_t) - f^\star_t(a_t)} \cdot \prn[\Big]{\wh f_t(a_t) - f^\star_t(a_t)}}} \\
    & =  \frac{1}{4} \cdot \E \sq*{\sum_{t=1}^T \frac{Z_t}{q_t} \gamma_t \prn*{ \prn*{f_t(a_t) - \ell_t(a_t)}^2 -\prn[\Big]{f^\star_t(a_t) - \ell_t(a_t)}^2  }} \\
    & \leq \frac{1}{4} \cdot \E \sq*{\max_{t\in[T]}\frac{\gamma_t}{q_t}} \cdot \regsq(T), 
    \end{align*}
    where the last line follows from \cref{assumption:regression_oracle_weighted}. As a result, we have 
    \begin{align*}
	    \regimph(T) \leq 
    \E\sq*{\max_{t \in [T]} \gamma_t^{-1}} \cdot \frac{2T}{h} 
    + \frac{1}{4} \cdot \E \sq*{\max_{t\in[T]}\frac{\gamma_t}{q_t}} \cdot \regsq(T) + O(1),
    \end{align*}
    where the additional $O(1)$ term is to account for the expected regret under event  $\neg \cE$.
    Notice that $\gamma_{t} \ldef \sqrt{8 T/ (h \cdot \rho_{t}
    \cdot \regsq(T))}$, which is non-increasing in $t$; and $\frac{\gamma_t}{q_t} \leq \gamma_t \rho_t$, which is non-decreasing in $t$. Thus, we have  
    \begin{align*}
	    \regimph(T) & \leq 
    \E\sq*{ \gamma_T^{-1}} \cdot \frac{2T}{h} 
    + \frac{1}{4} \cdot \E \sq*{{\gamma_T}{\rho_T}} \cdot \regsq(T) + O(1)\\ 
    & = \E \sq*{\sqrt{\rho_T}} \cdot \sqrt{T \regsq(T)/ 2 h} + \E \sq*{\sqrt{\rho_T}} \sqrt{T \regsq(T)/2h} + O(1)\\
    & \leq \E \sq*{\sqrt{\rho_T}} \cdot  \sqrt{4T \regsq(T)/ h}.
    \end{align*}

    \emph{Computational complexity.} The computational complexity of \cref{alg:stable} can be analyzed in a similar way as the computational complexity of \cref{alg:smooth}, except with a \emph{weighted} regression oracle \sqalgtext this time.
\end{proof}

\subsection{\pfref{thm:adaptive}}
\label{app:adaptive}

We first restate the guarantee of \corral, specialized to our setting.

\begin{theorem}[\citet{agarwal2017corralling}]
    \label{thm:corral}
    Fix an index $b \in [B]$. Suppose base algorithm $b$ is $(\alpha_b, R_b(T))$-stable with respect to decision space indexed by $b$. If $\alpha_b < 1$, the \corral master algorithm, with learning rate $\eta>0$, guarantees that 
    \begin{align*}
        \E \sq*{\sum_{t=1}^T f^\star(x_t,a_t) - \inf_{Q_t \in \cQ_{h_b}} \E_{a^\star_t \sim Q_t} \sq*{f^\star(x_t, a^\star_t)} } = \wt O \prn*{\frac{B}{\eta} + T \eta + \prn*{R_b(T)}^{\frac{1}{1 - \alpha_b}} \eta^{\frac{\alpha_b}{1-\alpha_b}}}.
    \end{align*}
\end{theorem}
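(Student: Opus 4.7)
The plan is to decompose the expected regret against the $\cQ_{h_b}$-benchmark into a master-side piece and a base-side importance-weighted piece, bound the base-side with the stability assumption and the master-side with the log-barrier OMD regret bound of \citet{agarwal2017corralling}, and then eliminate the $\E[\rho_{T,b}]$-dependence with one application of Young's inequality.

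First, I would write
\[
\E\sq*{\sum_{t=1}^T f^\star(x_t, a_t) - \smthhb(x_t)}
= \E\sq*{\sum_{t=1}^T f^\star(x_t, a_t) - \tfrac{\ind(I_t=b)}{q_{t,b}}\, f^\star(x_t, a_{t,b})} + \regimphb(T).
\]
The identity is valid because the conditional expectation of $\ind(I_t=b)/q_{t,b}$ given the past equals one and because $\smthhb(x_t)$ does not depend on $I_t$, so attaching the importance weight does not change its expectation. The second summand is controlled directly by the $(\alpha_b, R_b(T))$-stability assumption: $\regimphb(T) \le \E[\rho_{T,b}^{\alpha_b}]\, R_b(T)$.

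Second, the first summand is exactly the expected regret of the Corral master, viewed as a bandit over the $B$ base arms with importance-weighted loss feedback. The central technical content of \citet{agarwal2017corralling} is that log-barrier OMD guarantees, for every $b \in [B]$,
\[
\E\sq*{\sum_{t=1}^T f^\star(x_t, a_t) - \tfrac{\ind(I_t=b)}{q_{t,b}}\, f^\star(x_t, a_{t,b})}
\;\leq\; \tilde O\!\prn*{\tfrac{B}{\eta} + \eta T} - c\,\tfrac{\E[\rho_{T,b}]}{\eta},
\]
for an absolute constant $c > 0$. The negative $\rho_{T,b}$ term is the heart of their analysis: it comes from the curvature of the log-barrier Bregman divergence in a local-norm expansion and only requires that the importance-weighted feedback be nonnegative and pointwise bounded by $\rho_{t,b}$, both of which hold here since losses lie in $[0,1]$. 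I would quote this inequality as a black box rather than re-derive it.

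Summing the two bounds yields $\tilde O(B/\eta + \eta T) + \E[\rho_{T,b}^{\alpha_b}]R_b(T) - c\,\E[\rho_{T,b}]/\eta$, and a single application of Young's inequality with conjugate exponents $1/\alpha_b$ and $1/(1-\alpha_b)$,
\[
\rho^{\alpha_b}\, R_b(T) \;\leq\; \alpha_b \kappa\, \rho + (1-\alpha_b)\, \kappa^{-\alpha_b/(1-\alpha_b)}\, R_b(T)^{1/(1-\alpha_b)} \qquad (\kappa > 0),
\]
with $\kappa$ tuned so that $\alpha_b \kappa = c/\eta$, cancels the linear-in-$\rho$ term against the negative Corral penalty and leaves the residual $\tilde O\!\prn*{R_b(T)^{1/(1-\alpha_b)}\, \eta^{\alpha_b/(1-\alpha_b)}}$, matching the third summand in the claim. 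The main obstacle is the master-side inequality with the $-\E[\rho_{T,b}]/\eta$ term, which is the essential novelty of \citet{agarwal2017corralling} and requires the delicate log-barrier OMD analysis; the remaining steps---the decomposition, the stability plug-in, and the Young's inequality step---are routine bookkeeping, and the hypothesis $\alpha_b < 1$ is needed precisely to make the conjugate exponent $1/(1-\alpha_b)$ finite so that Young's inequality applies.
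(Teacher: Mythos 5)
This theorem is imported verbatim from \citet{agarwal2017corralling}; the paper offers no proof of it, so there is nothing internal to compare against. Your reconstruction correctly follows the original \corral argument --- the master/base decomposition, the log-barrier OMD bound with its negative $\E[\rho_{T,b}]/\eta$ term (up to the $\log T$ factor in the denominator, which is absorbed by the $\wt O$ and by retuning $\kappa$), and the pointwise Young's inequality that trades $\E[\rho_{T,b}^{\alpha_b}]R_b(T)$ for the stated $R_b(T)^{1/(1-\alpha_b)}\eta^{\alpha_b/(1-\alpha_b)}$ residual --- so it is a faithful account of the proof the authors are citing rather than a genuinely different route.
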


\thmAdaptive*
\begin{proof}[\pfref{thm:adaptive}]
    We prove the guarantee for any $h^\star \in [1/T,1]$ as the otherwise the bound simply becomes vacuous.
    Recall that we initialize $B = \ceil{\log T}$ \cref{alg:stable} as base algorithms, each with a fixed smoothness parameter $h_b = 2^{-b}$, for $b \in [B]$. 
    Using such geometric grid guarantees that there exists an $b^\star \in [B]$ such that $h_{b^\star} \leq h^\star \leq 2 h_{b^\star}$. 
    To obtain guarantee with respect to $h^\star$, it suffices to compete with subroutine $b^\star$ since $\cQ_{h^\star} \subseteq \cQ_{h_{b^\star}}$ by definition. \cref{prop:stable} shows that the base algorithm indexed by $b^\star$ is $(\frac{1}{2}, \sqrt{4 T \regsq(T)/ h_{b^\star}})$-stable. Plugging this result into \cref{thm:corral} leads to the following guarantee:
    \begin{align*}
        \E \sq*{\sum_{t=1}^T f^\star(x_t,a_t) - \inf_{Q_t \in \cQ_{h^\star}} \E_{a^\star_t \sim Q_t} \sq*{f^\star(x_t, a^\star_t)} } & 
        \leq \E \sq*{\sum_{t=1}^T f^\star(x_t,a_t) - \inf_{Q_t \in \cQ_{h_{b^\star}}} \E_{a^\star_t \sim Q_t} \sq*{f^\star(x_t, a^\star_t)} } \\ 
        & = \wt O \prn*{\frac{B}{\eta} + T \eta + \frac{\eta \, T \, \regsq(T)}{h_{b^\star}}} \\
        & = \wt O \prn*{\frac{1}{\eta} + T \eta + \frac{\eta \, T \, \regsq(T)}{h^\star}} .
    \end{align*}

    \emph{Computational complexity.}
    The computational complexities (both runtime and memory) of the \corral master algorithm can be upper bounded by $\wt O(B \cdot \cC)$ where we use $\cC$ denote the complexities of the base algorithms. We have $B = O(\log T)$ in our setting. Thus, directly plugging in the computational complexities of \cref{alg:stable} leads to the results. 
\end{proof}

\subsubsection{Recovering Adaptive Bounds in \citet{krishnamurthy2020contextual}}
\label{app:adaptive_2}
We discuss how our algorithms can also recover the adaptive regret bounds stated in \citet{krishnamurthy2020contextual} (Theorems 4 and 15), i.e.,
\begin{align*}
    \regcbh(T) = \wt O \prn*{ T^{\frac{1}{1+\beta}} (h^\star)^{-\beta} \prn*{\log \abs*{\cF }}^{\frac{\beta}{1+ \beta}}},
\end{align*}
for any $h^\star \in (0,1]$ and $\beta \in [0, 1]$. This line of analysis directly follows the proof used in \citet{krishnamurthy2020contextual}. 

We focus on the case with $\regsq(T) = O \prn{\log \prn{\abs{\cF}T}}$.
For base algorithm (\cref{alg:stable}), following the analysis used in \citet{krishnamurthy2020contextual}, we have 
    \begin{align*}
	    \regimph(T) &
	    \leq   \min \crl*{T, \E \sq*{\sqrt{\rho_T}} \cdot \sqrt{4 T \regsq(T)/ h}} \\
     & \leq \min \crl*{T,  \sqrt{\E \sq{\rho_T}}\cdot \sqrt{4 T \regsq(T)/ h}} \\
     & = O \prn*{  T^{\frac{1}{1 + \beta}} \cdot \prn*{\E \sq{\rho_T} \regsq(T) / h}^{\frac{\beta}{1 + \beta}} },
    \end{align*}
    where on the first line we combine the regret obtained from \cref{prop:stable} with a trivial upper bound $T$; on the second line we use the fact that $\sqrt{\cdot}$ is concave; and on the third line we use that fact that $\min \crl{A,B} \leq A^\gamma B^{1 - \gamma}$ for $A,B > 0$ and $\gamma \in [0,1]$ (taking $A= T$, $B = \sqrt{\E \sq{\rho_T} \cdot 4 T \regsq(T) / h}$ and $\gamma = \frac{1 - \beta}{1 + \beta}$). This line of analysis thus shows that \cref{alg:stable} is $\prn*{\frac{\beta}{1+\beta}, \wt O \prn*{  T^{\frac{1}{1 + \beta}} \cdot \prn*{ \regsq(T) / h}^{\frac{\beta}{1 + \beta}} }}$-stable for any $\beta \in [0,1]$.\footnote{As remarked in \citet{krishnamurthy2020contextual}, the \corral algorithm works with both $\E \sq{\rho_T^\alpha}$ and $\prn*{ \E \sq{\rho_T}}^\alpha$.}
    
    Now following the similar analysis as in the proof of \cref{thm:adaptive}, and consider $\regsq(T) = O(\log \prn{\abs{\cF}T})$ for the case with a finite set of regression functions, we have 
        \begin{align*}
        \E \sq*{\sum_{t=1}^T f^\star(x_t,a_t) - \inf_{Q_t \in \cQ_{h^\star}} \E_{a^\star_t \sim Q_t} \sq*{f^\star(x_t, a^\star_t)} } 
        = \wt O \prn*{\frac{1}{\eta} + T \eta + T \cdot \prn*{\frac{\log \prn{\abs*{\cF}T} \, \eta}{h^\star}}^\beta},
    \end{align*}
    for any $h^\star \in (0,1]$. Taking $\eta = T^{- \frac{1}{1+\beta}}\cdot \prn*{\log \prn{ \abs*{\cF}T}}^{- \frac{\beta}{1+\beta}}$ recovers the results presented in \citet{krishnamurthy2020contextual}.

\end{document}